\documentclass[letterpaper, 10 pt, conference]{ieeeconf}

\IEEEoverridecommandlockouts
\usepackage{cite}
\usepackage{float}
\usepackage{graphicx}
\usepackage{amssymb}

\usepackage{amsmath}
\usepackage{mathtools}  
\usepackage{tikz}

\usepackage{nccmath}

\usepackage{amsthm}
\usepackage{tabularray}
\usepackage{booktabs}
\usepackage{makecell}

\usepackage[usenames,dvipsnames,svgnames,table]{xcolor}
\theoremstyle{plain}

\newtheorem{proposition}{Proposition}

\usepackage{float}
\usetikzlibrary{arrows.meta,positioning,calc}
\floatstyle{plaintop}
\usepackage{MnSymbol}%
\usepackage{wasysym}%
\usepackage{caption}
\usepackage{subcaption}
\usepackage{graphicx}
\usepackage{import}
\usepackage{hyperref}

\usepackage{algorithm}
\usepackage{algpseudocode}

\definecolor{deepblue}{rgb}{0,0,1}
\usepackage{dblfloatfix}

\definecolor{SpringGreen}{RGB}{40,160,80} 
\definecolor{SkyBlue}{RGB}{60,130,200}   
\definecolor{Khaki}{RGB}{160,100,70}     
\definecolor{Lavender}{RGB}{180,100,180}  
\definecolor{FruitOrange}{RGB}{245,218,95}

\begin{document}

\title{From Ellipsoids to Midair Control of Dynamic Hitches}

\author{Jiawei Xu, Subhrajit Bhattacharya, and David Salda\~{n}a
    \thanks{
        J. Xu, S. Bhattacharya, and D. Salda\~{n}a are with the Autonomous and Intelligent Robotics Laboratory (AIRLab), Lehigh University, PA, 18015, USA. E-mail: \texttt{\{jix519, sub216, saldana\}@lehigh.edu}
    }
\thanks{* The authors gratefully acknowledge the support of the NSF Awards~2322840 and 2442475.}
}

\maketitle

\begin{abstract}
The ability to manipulate and interlace cables using aerial vehicles can greatly improve aerial transportation tasks. Such interlacing cables create hitches by winding two or more cables around each other, which can enclose payloads or can further develop into knots. Dynamic modeling and control of such hitches are key to mastering inter-cable interactions in the context of cable-suspended aerial manipulation. This paper introduces an ellipsoid-based kinematic model to connect the geometric nature of a hitch created by two cables and the dynamics of the hitch driven by four aerial vehicles, which reveals the control-affine form of the system. As the constraint for maintaining tension of a cable is also control-affine, we design a quadratic programming-based controller that combines Control Lyapunov and High-Order Control Barrier Functions (CLF-HOCBF-QP) to precisely track a desired hitch position and system shape while enforcing safety constraints like cable tautness. We convert desired geometric reference configurations into target robot positions and introduce a composite error into the Lyapunov function to ensure a relative degree of one to the input.  Numerical simulations validate our approach, demonstrating stable, high-speed tracking of dynamic references.
\end{abstract}

\section{Introduction}{
    Ropes and cables are one of the most widely used soft tools in human history~\cite{fronzaglia2006history}. From catching and securing objects~\cite{dana1922use} to transportation~\cite{7171008}, such long and thin structures as cables are useful in many applications because of their high flexibility and constrained extensibility. This simple yet powerful duality has enabled countless applications, from the capture of objects~\cite{dana1922use}, transportation~\cite{7171008}, to the construction and maintenance of complex structures~\cite{d2024model}. In recent robotics literature, particularly in aerial systems, cable-suspended aerial systems are emerging as a compelling alternative to traditional rigid-link manipulators, thereby expanding the versatility of autonomous agents in various scenarios~\cite{sarkisov2019development,10758214}.
    Moreover, tying cables into knots allows the ad-hoc creation of fixtures and enclosures on objects~\cite{turner1996history}, which facilitates the interaction tasks with distinct objects that lack common grasping points, such as glass bottles, comparable to the capabilities of humans' dexterous hands. 
    
    Advances in the past decade highlight the automation of tying static or quasi-static hitches and knots~\cite{7354218,d2023forming}. A common goal of these approaches is to use aerial vehicles to drive the tethered cables to achieve desired topological configurations~\cite{9981363}. In the process, the interlaced cables enforce kinematic constraints to the aerial systems. However, these methods largely rely on quasi-static assumptions~\cite{1638335}. As studied in many research on cable-driven load manipulation, considering the full dynamics of the system not only extends the agility of the approaches, but also enables stability guarantee~\cite{drones8020035}. The quasi-static simplification of current study in inter-cable manipulation precludes high-speed operation especially when the kinematic model is intrinsically coupled with the system dynamics~\cite{sun2025agile}, despite the agility of aerial systems.
    
    \begin{figure}[t]
        \centering
        \includegraphics[width=\linewidth]{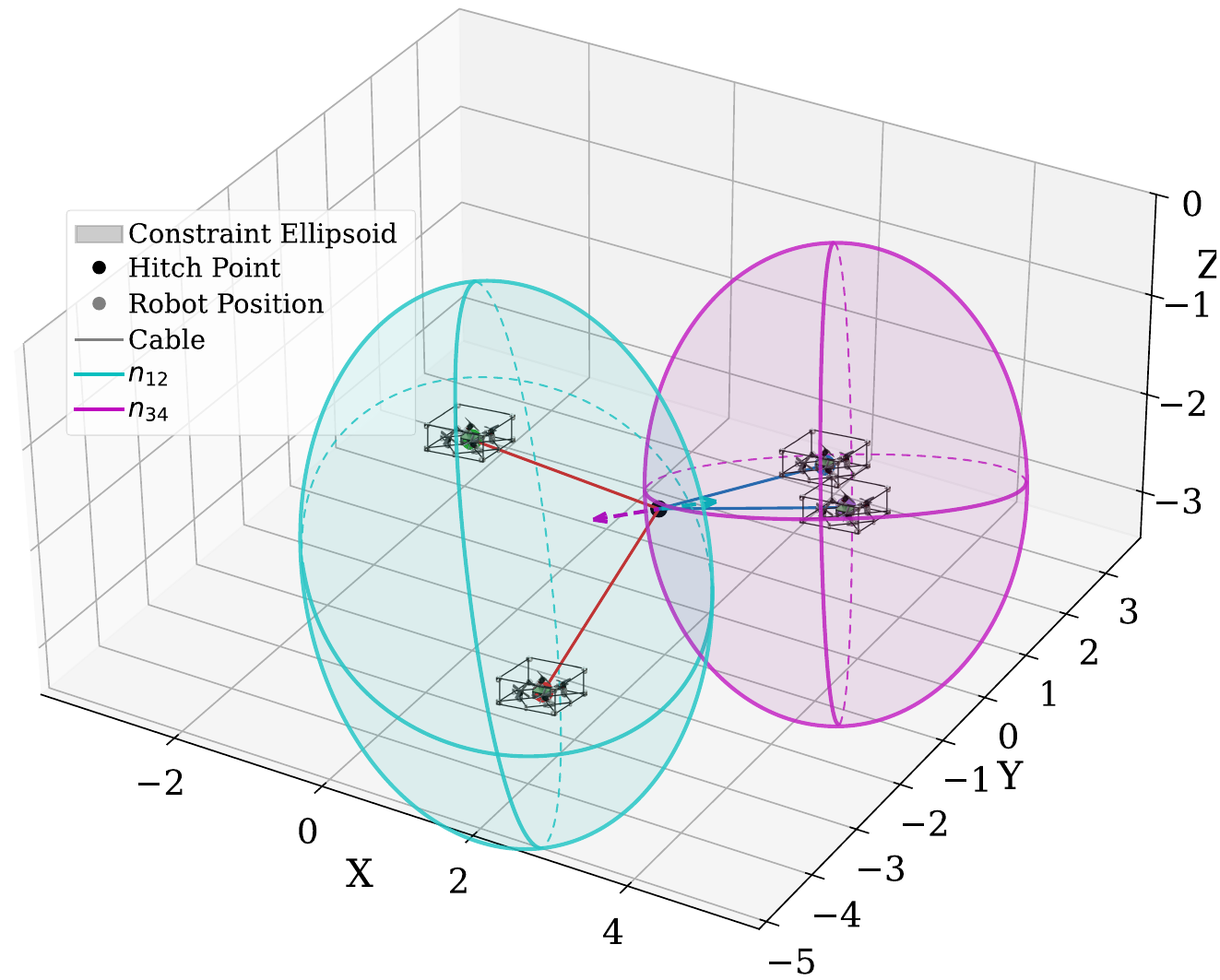}
        \caption{Illustration of the cable-suspended system with the constraint ellipsoids overlay. Line segments of the same color denote a single cable connecting two robots. The cables interlace/wind around each other at the \emph{hitch point}. The two dashed arrows represent the normal vectors of the two ellipsoids at the hitch point.}
        \label{fig:traj}
    \end{figure}
    In this paper, we shift the paradigm from the procedural steps of creating a geometric cable enclosure to the continuous control of the dynamic hitch as the entire cable-suspended system moves through space. We define a \textbf{\emph{hitch point}} or simply, \emph{a hitch}, to be the point at which two cables interlace/wind around each other (Fig. ~\ref{fig:traj}). As a hitch is the foundational element of knots and more complex inter-cable structures, understanding and controlling its  dynamics is a solid milestone toward aerial cable manipulation. Therefore, we focus on the manipulation of a hitch, as well as the cable configuration that satisfies the kinematic constraints.

    The contribution of this paper is twofold. First, from the pin-and-string ellipsoid kinematics model, by introducing a ``virtual mass'' at the hitch to emulate the resistance of motion, we derive the full dynamics of the system of four robots tethered to two interlacing cables, using Newton's formulation. Second, recognizing the control-affine form of the system, we develop a Control Lyapunov Function-High-Order Control Barrier Function-based Quadratic Programming (CLF-HOCBF-QP) controller to drive the system into reference configurations that specify the hitch position and the ellipsoids' shape. We derive the desired robot position from a reference configuration and control its input forces by solving the QP online. To ensure the Lyapunov function has a relative degree one to the input, we integrate a cascade error in the Lyapunov function, and show the convergence of the robot to the desired positions, given the convergence of the Lyapunov function constrained by the QP. 
    
    We implement the system dynamics and the CLF-HOCBF-QP controller in a numerical environment, and evaluate the Lyapunov function value and the reference configuration error under different scenarios, including tracking a static reference, tracking a slow reference while receiving a high Gaussian noise at the hitch, and tracking dynamic references with increasing speeds.
}

\section{Hitch Dynamics}{
    \begin{figure}[t]
        \centering
        \vspace{0.5em}
        \includegraphics[width=\linewidth]{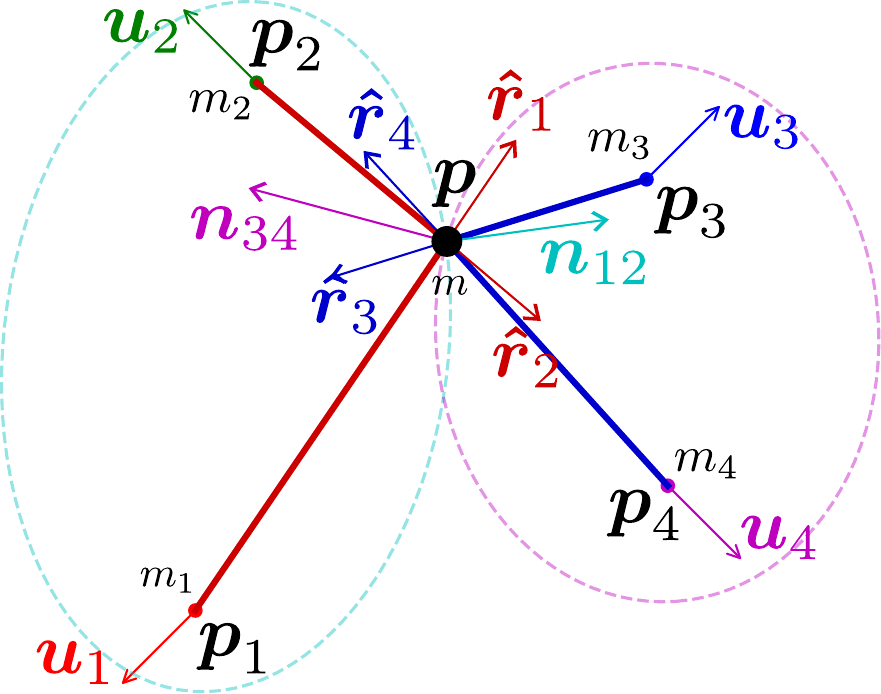}
        \caption{Two cables interlacing at the hitch $\boldsymbol{p}$, each connecting two robots. Each robots apply an input force vector $\boldsymbol{u}_i$, illustrated in 2D.}
        \label{fig:cables}
    \end{figure}
    In our setting, four quadrotors are connected to the ends of two cables. These cables interlace, creating a hitch at their intersection. The hitch’s configuration is constrained by ellipsoidal geometry when the cables are in tension (see Fig.~\ref{fig:traj} for illustration).
    For each pair of robots and its connecting taut cable, the summation of distance from any point on the cable to the two robots is equal to the constant total cable length. The set of all possible positions of such a point turns out to be an ellipse in 2D, or a prolate spheroid in 3D~\cite{freksa2019geometric,hilbert2021geometry} which is one type of ellipsoids. For conciseness, we refer the sets of both kinds an ``ellipsoid'' for the rest of this paper. As we consider a hitch formed by two winding cables on both cables geometrically, the position of the hitch must belong to both ellipsoid sets. In other words, the hitch is at the intersection of the two ellipsoids.

    Formally, since a hitch is formed by two interlacing taut cables driven by two robots each, its position $\boldsymbol{p}\in\mathbb{R}^n$ is determined by the robot positions and the cable lengths, where the dimension $n = 2$ represents a planar configuration and $n = 3$ represents a more general $3D$ configuration. With inextensible cables, $\boldsymbol{p}$ must satisfy 
    \begin{eqnarray}
        \begin{aligned}
            \|\boldsymbol{p - p}_1\| + \|\boldsymbol{p - p}_2\| - l_{12}& = 0,\\
            \|\boldsymbol{p - p}_3\| + \|\boldsymbol{p - p}_4\| - l_{34}& = 0,
        \end{aligned}
        \label{eq:pinandstring}
    \end{eqnarray}
    where robots $1, 2$ are connected to cable $c_{12}$ at $\boldsymbol{p}_1$, $\boldsymbol{p}_2$, respectively; robots $3, 4$ are connected to cable $c_{34}$ at $\boldsymbol{p}_3$, $\boldsymbol{p}_4$, respectively. We refer to~\eqref{eq:pinandstring} as the kinematics of the hitch. Taking double derivatives of~\eqref{eq:pinandstring}, we can rearrange the equations with respect to $\boldsymbol{\ddot{p}}$ to reveal the affine form,
    \begin{equation}
        \boldsymbol{N\ddot{p} = b}, \text{ where }
        \label{eq:affine}
    \end{equation}
    \begin{eqnarray}
        {\begin{aligned}
            \boldsymbol{N} &= \begin{bmatrix}
                \boldsymbol{\hat{r}}_1^\top + \boldsymbol{\hat{r}}_2^\top\\
                \boldsymbol{\hat{r}}_3^\top + \boldsymbol{\hat{r}}_4^\top
            \end{bmatrix}\in\mathbb{R}^{2\times n},\\
            \boldsymbol{b} &= \begin{bmatrix}
                \boldsymbol{\hat{r}}_1^\top\boldsymbol{\boldsymbol{\ddot{p}}}_1 + \boldsymbol{\hat{r}}_2^\top\boldsymbol{\boldsymbol{\ddot{p}}}_2 - \boldsymbol{\dot{\hat{r}}}_1^\top\boldsymbol{\dot{r}}_1 - \boldsymbol{\dot{\hat{r}}}_2^\top\boldsymbol{\dot{r}}_2\\
                \boldsymbol{\hat{r}}_3^\top\boldsymbol{\boldsymbol{\ddot{p}}}_3 + \boldsymbol{\hat{r}}_4^\top\boldsymbol{\boldsymbol{\ddot{p}}}_4 - \boldsymbol{\dot{\hat{r}}}_3^\top\boldsymbol{\dot{r}}_3 - \boldsymbol{\dot{\hat{r}}}_4^\top\boldsymbol{\dot{r}}_4
            \end{bmatrix}\in\mathbb{R}^{2}, \text{ and }
        \end{aligned}}
        \label{eq:affine-components}
    \end{eqnarray} 
    \begin{equation}
        \boldsymbol{r}_i = \boldsymbol{p - p}_i;\quad
        \boldsymbol{\hat{r}}_i = \frac{\boldsymbol{r}_i}{\|\boldsymbol{r}_i\|};\quad
        \boldsymbol{\dot{\hat{r}}}_i^\top = \frac{1}{\|\boldsymbol{r}_i\|} \boldsymbol{\dot{r}}_i^\top\left(\boldsymbol{I} - \boldsymbol{\hat{r}}_i\boldsymbol{\hat{r}}^\top_i\right).
    \end{equation}
    Note that $\boldsymbol{\hat r}_1 + \boldsymbol{\hat r}_2$ and $\boldsymbol{\hat r}_3 + \boldsymbol{\hat r}_4$ are the normal vectors of the ellipsoids at $\boldsymbol{p}$, as the gradient of the LHS of~\eqref{eq:pinandstring} at $\boldsymbol{p}$ are
    \begin{eqnarray}
        \begin{aligned}
            \nabla_{\boldsymbol{p}}\left(\|\boldsymbol{p - p}_1\| + \|\boldsymbol{p - p}_2\| - l_{12}\right) &= \boldsymbol{\hat{r}}_1 + \boldsymbol{\hat{r}}_2\\
            \nabla_{\boldsymbol{p}}\left(\|\boldsymbol{p - p}_3\| + \|\boldsymbol{p - p}_4\| - l_{34}\right) &= \boldsymbol{\hat{r}}_3 + \boldsymbol{\hat{r}}_4.
        \end{aligned}
    \end{eqnarray} 
    We denote $\boldsymbol{n}_{12}\triangleq\boldsymbol{\hat r}_1 + \boldsymbol{\hat r}_2$, $\boldsymbol{n}_{34}\triangleq\boldsymbol{\hat r}_3 + \boldsymbol{\hat r}_4$, and $\boldsymbol{N} = \left[\boldsymbol{n}_{12}, \boldsymbol{n}_{34}\right]^\top$. Practically, the existence of the hitch requires each of the four cable segments to maintain a length less than the corresponding cable length, i.e., $0 < \|\boldsymbol{r}_1\|, \|\boldsymbol{r}_2\| < l_{12}$ and $0 < \|\boldsymbol{r}_3\|, \|\boldsymbol{r}_4\| < l_{34}$. Fig.~\ref{fig:cables} illustrates these quantities. 

    \subsection{Dynamics}
    \noindent
    The state vector of the cable-suspended system,
    \begin{equation}
        \boldsymbol{x} = \left[\boldsymbol{p}^\top, \boldsymbol{p}_1^\top, \boldsymbol{p}_2^\top, \boldsymbol{p}_3^\top, \boldsymbol{p}_4^\top, \boldsymbol{\dot{p}}^\top, \boldsymbol{\dot{p}}_1^\top, \boldsymbol{\dot{p}}_2^\top, \boldsymbol{\dot{p}}_3^\top, \boldsymbol{\dot{p}}_4^\top\right]^\top\in\mathbb{R}^{10n},\label{eq:state-vector}
    \end{equation}
    is composed of robot and hitch positions,
    its derivative $\boldsymbol{\dot x} = \left[\boldsymbol{\dot{p}}^\top, \boldsymbol{\dot{p}}_1^\top,\dots, \boldsymbol{\dot{p}}_4^\top, \boldsymbol{\ddot{p}}^\top, \boldsymbol{\ddot{p}}_1^\top,\dots, \boldsymbol{\ddot{p}}_4^\top\right]^\top$ involves acceleration of the hitch $\boldsymbol{\ddot{p}}$, and of the robots $\boldsymbol{\ddot{p}}_i$ for $i = 1, \dots, 4$. We denote the tension force along cable segment $(\boldsymbol{p}, \boldsymbol{p}_i)$ as $t_i$, and the external force vector applied on $r_i$ as $\boldsymbol{u}_i\in\mathbb{R}^n$. Assuming~$r_i$ has mass $m_i$, and a virtual mass $m$ is introduced at the hitch to emulate the resistance of the hitch motion from the actuation applied at the cable, we use Newton's equation to describe their motion,
    \begin{eqnarray}
        m_i\boldsymbol{\ddot{p}}_i &=& \boldsymbol{u}_i + t_i\boldsymbol{\hat{r}}_i, \text{ for } i = 1, \dots, 4,\label{eq:mass-acceleration-robots}\\
        m\boldsymbol{\ddot{p}} &=& -\boldsymbol{\hat{R}t} + \boldsymbol{f}_d + \boldsymbol{f}_{ext},\label{eq:mass-acceleration-hitch}
    \end{eqnarray}
    where $\boldsymbol{\hat R} = \left[\boldsymbol{\hat r}_1, \boldsymbol{\hat r}_2, \boldsymbol{\hat r}_3, \boldsymbol{\hat r}_4\right]\in\mathbb{R}^{n\times4}, \boldsymbol{t} = \left[t_1, t_2, t_3, t_4\right]^\top$, $\boldsymbol{f}_d(\boldsymbol{x})$ describes the unknown damping of the hitch motion due to friction between cables, which depends on the state of the cable suspended system, and $\boldsymbol{f}_{ext}$ is the external force applied on the hitch, e.g., through a ring the cables are carrying a payload. 
    In addition, since the tension along a taut cable is constant, we have
    \begin{equation}
        t_1 - t_2 = 0 \text{ and } t_3 - t_4 = 0.\label{eq:constant-tension}
    \end{equation} 
    Note that we can decouple the acceleration of the hitch and the robots from the double derivative of the hitch kinematics~\eqref{eq:affine} by substituting $\boldsymbol{\ddot{p}}$ and $\boldsymbol{\ddot{p}}_i$ in~\eqref{eq:affine} and~\eqref{eq:affine-components} with~\eqref{eq:mass-acceleration-robots} and~\eqref{eq:mass-acceleration-hitch}, which leads to the isolated affine expression of the tension forces with respect to the system state $\boldsymbol{x}$ and input~$\boldsymbol{u}_i$,
    \begin{eqnarray}
        &\boldsymbol{Mt} = \boldsymbol{Cu + w}, \text{ where }\label{eq:tension}\\
        &{\begin{aligned}
            \boldsymbol{M} & = \begin{bmatrix}
                \|\boldsymbol{n}_{12}\|^2 + \frac{m}{m_1} + \frac{m}{m_2}& 
                0 & 
                \boldsymbol{n}_{12}^\top\boldsymbol{n}_{34} & 
                0\\
                \boldsymbol{n}_{34}^\top\boldsymbol{n}_{12} & 
                0 & 
                \|\boldsymbol{n}_{34}\|^2 + \frac{m}{m_3} + \frac{m}{m_4} & 
                0 \\
                1 & -1 & 0 & 0\\
                0 & 0 & 1 & -1
            \end{bmatrix} \\
            \boldsymbol{C} & = -m\begin{bmatrix}
                \begin{bmatrix}
                    \frac{\boldsymbol{\hat{r}}_1^\top}{m_1} & \frac{\boldsymbol{\hat{r}}_2^\top}{m_2}
                    & \boldsymbol{0}_{1\times n} & \boldsymbol{0}_{1\times n}\\
                     
                    \boldsymbol{0}_{1\times n} & \boldsymbol{0}_{1\times n} &
                        \frac{\boldsymbol{\hat{r}}_3^\top}{m_3} & \frac{\boldsymbol{\hat{r}}_4^\top}{m_4}
                \end{bmatrix}\\
                \boldsymbol{0}_{2\times 4n}
            \end{bmatrix}\in\mathbb{R}^{4\times4n}\\
            \boldsymbol{w} & = \begin{bmatrix}
                m\begin{bmatrix}
                    \boldsymbol{\dot{\hat{r}}}_1^\top\boldsymbol{\dot{r}}_1 + \boldsymbol{\dot{\hat{r}}}_2^\top\boldsymbol{\dot{r}}_2\\
                    \boldsymbol{\dot{\hat{r}}}_3^\top\boldsymbol{\dot{r}}_3 + \boldsymbol{\dot{\hat{r}}}_4^\top\boldsymbol{\dot{r}}_4
                \end{bmatrix} + \boldsymbol{N}(\boldsymbol{f}_{d} + \boldsymbol{f}_{ext})\\
                \boldsymbol{0}_{2\times1}
            \end{bmatrix}
        \end{aligned}}
        \label{eq:tension-components}
    \end{eqnarray}
    As $\boldsymbol{M}\in\mathbb{R}^{4\times4}$ is invertible, we can obtain the tension 
    \begin{equation}
        \boldsymbol{t} = \boldsymbol{M}^{-1}\left(\boldsymbol{Cu + w}\right).\label{eq:tension-solution}
    \end{equation}    
    It is crucial to ensure the tension across the two cables is always positive so that our model accurately describes the hitch motion as well as the robot motion, i.e., the cables are taut and $\boldsymbol{t}\succ\boldsymbol{0}$, where ``$\succ$'' represents element-wise greater-than. We remark that by substituting the tension forces $t_i$ obtained from~\eqref{eq:tension-solution} into the Newton's equations~\eqref{eq:mass-acceleration-robots}~\eqref{eq:mass-acceleration-hitch}, the hitch and robot accelerations $\boldsymbol{\ddot{p}}, \boldsymbol{\ddot{p}}_i$ turn out linear to the input $\boldsymbol{u} = \left[\boldsymbol{u}_1^\top, \boldsymbol{u}_2^\top, \boldsymbol{u}_3^\top, \boldsymbol{u}_4^\top\right]^\top$. Therefore, we can describe the system dynamics in the affine form of
    \begin{equation}
        \boldsymbol{\dot{x}} = \boldsymbol{h(x)} + \boldsymbol{B(x)u},
        \label{eq:2-cable-4-robot-system}
    \end{equation}
    where the state drift
    \begin{equation}
        {\boldsymbol{h(x)} = \begin{bmatrix}
            \boldsymbol{\dot{p}}\\
            \boldsymbol{\dot{p}}_1\\
            \vdots\\
            \boldsymbol{\dot{p}}_4\\
            -\frac{1}{m}\boldsymbol{\hat R}\boldsymbol{M}^{-1}\boldsymbol{w} + \frac{\boldsymbol{f}_d}{m} + \frac{\boldsymbol{f}_{ext}}{m}\\
            \boldsymbol{JM}^{-1}\boldsymbol{w}
        \end{bmatrix}}
        \label{eq:drift}
    \end{equation}
    and the input matrix
    \begin{equation}
        {\boldsymbol{B(x)} = \begin{bmatrix}
            \boldsymbol{0}_{5n\times4n}\\
            -\frac{1}{m}\boldsymbol{\hat R}\boldsymbol{M}^{-1}\boldsymbol{C}\\
            \boldsymbol{J}_m + \boldsymbol{JM}^{-1}\boldsymbol{C}
        \end{bmatrix}}
        \label{eq:input-matrix}
    \end{equation} 
    where $\boldsymbol{J} = \text{blkdiag}\left(\boldsymbol{\hat r}_1, \boldsymbol{\hat r}_2, \boldsymbol{\hat r}_3, \boldsymbol{\hat r}_4\right)\in\mathbb{R}^{4n\times4}$ 
    and 
    $\boldsymbol{J}_m = \text{blkdiag}\left(\frac{1}{m_1}\boldsymbol{I}_n, \frac{1}{m_2}\boldsymbol{I}_n, \frac{1}{m_3}\boldsymbol{I}_n, \frac{1}{m_4}\boldsymbol{I}_n\right)\in\mathbb{R}^{4n\times4n}$, both block diagonal matrices. We denote $\boldsymbol{B_p} = -\frac{1}{m}\boldsymbol{\hat R}\boldsymbol{M}^{-1}\boldsymbol{C}$ the first nonzero block of $\boldsymbol{B}$ as it is associated to the hitch acceleration, and $\boldsymbol{B}_{nonzero}$ the entire nonzero half of $\boldsymbol{B}$. Notably, the affine hitch dynamics in~\eqref{eq:affine} facilitates the straight-forward deployment of optimal control strategies.

}

\section{Control}{
    We focus on controlling the hitch motion. 
    Since the ellipsoid abstraction of the two cables provides an intuition of the cable-suspended system, we design the controller to drive the system to achieve a hitch position $\boldsymbol{p}^{r}$, ellipsoid normal vectors $\boldsymbol{n}_{12}^{r}, \boldsymbol{n}_{34}^{r}$, and major axis lengths $d_{12}^{r}, d_{34}^{r}$. We denote $\left(\boldsymbol{p}^{r}, \boldsymbol{n}_{12}^{r}, \boldsymbol{n}_{34}^{r}, d_{12}^{r}, d_{34}^{r}\right)$ as the \emph{reference} configuration.
    
    \subsection{Equilibrium and nominal input}{
        \label{sec:nominal}
        In the case where no external force is applied on the hitch, i.e., $\boldsymbol{f}_{ext} = \boldsymbol{0}$, we find that the system defined in~\eqref{eq:2-cable-4-robot-system} has a natural equilibrium subspace associated with a static hitch, of which the criteria are obtained from~\eqref{eq:mass-acceleration-hitch}. Letting $\boldsymbol{\ddot{p} = 0}$ and $\boldsymbol{\dot{p} = 0}$, the hitch dynamics reduces to
        $-\sum_{i=1}^4 t_i\boldsymbol{\hat{r}}_i = \boldsymbol{0}.$
        In combination with the constant tension along each cable, i.e., $t_1 = t_2, t_3 = t_4$, we obtain
        \begin{equation}
            t_{1}\boldsymbol{n}_{12} = -t_{3}\boldsymbol{n}_{34}. \label{eq:equilibrium}
        \end{equation}
        Since the tension forces $\boldsymbol{t}\succ\boldsymbol{0}$ are larger than zero, the equilibrium criteria is equivalent to $\boldsymbol{n}_{12}\parallel\boldsymbol{n}_{34}$. Thus, the criteria of $\boldsymbol{p}$ at equilibrium are equivalent to the two ellipsoids cotangent at $\boldsymbol{p}$.
        Similarly, we obtain the nominal input subspace that is associated with zero robot acceleration by setting $\boldsymbol{\ddot p}_i = \boldsymbol{0}$ of~\eqref{eq:mass-acceleration-robots} for $i=1, \dots, 4$, which reduces to $\boldsymbol{u}_i = -t_i\boldsymbol{\hat r}_i$, i.e., the input force of each robot is in the opposite direction of the cable tension.
    }

    \subsection{Feasibility}{
        \label{sec:feasibility}
        Since the tension force along each cable is strictly positive when the cables are taut, the feasible acceleration of the hitch is constrained depending on the cable directions. From the hitch dynamics~\eqref{eq:mass-acceleration-hitch}, we obtain 
        \begin{equation}
            \boldsymbol{\hat{R}t} = -m\boldsymbol{\ddot{p}} + \boldsymbol{f}_d + \boldsymbol{f}_{ext}.\label{eq:rearrange-mass-acceleration-hitch}
        \end{equation}
        To maintain the positiveness of the tension, the achievable hitch acceleration should render the RHS of~\eqref{eq:rearrange-mass-acceleration-hitch} inside the cone enclosed by rays $\boldsymbol{\hat{r}}_1, \boldsymbol{\hat{r}}_2, \boldsymbol{\hat{r}}_3, \boldsymbol{\hat{r}}_4$. Moreover, since $t_1 = t_2, t_3 = t_4$, the achievable hitch acceleration is such that the RHS of~\eqref{eq:rearrange-mass-acceleration-hitch} lies in the planar cone spanned by $\boldsymbol{n}_{12}$ and $\boldsymbol{n}_{34}$. In the spatial scenario, $n = 3$, it means the feasible acceleration of the hitch is limited in a plane. Moreover, when the normal vectors of the two ellipsoids are colinear, i.e., $\boldsymbol{n}_{12}\times\boldsymbol{n}_{34} = \boldsymbol{0}$, the cables' tension cannot contribute to the force in the direction tangential to both ellipsoids, which reduces further the dimension of the feasible hitch acceleration. Fig.~\ref{fig:singular-Bp} shows the plot of the  second smallest singular value of $\boldsymbol{B_p}$ as $\|\boldsymbol{n}_{12}\times\boldsymbol{n}_{34}\|$ changes in 3D while we maintain both normal vectors a magnitude of $\sqrt{2}$. Note that the smallest singular value of $\boldsymbol{B_p}$ is always $0$ in 3D as two vectors spans at most a plane.

        Although the hitch motion is directly affected by the cable tension, it is fundamentally affected by the robot position, constrained by~\eqref{eq:pinandstring}. We study the smallest singular values of the entire nonzero block of the input matrix $\boldsymbol{B}_{nonzero}$ in~\ref{fig:singular-Bnonzero} and find that it is always larger than $1$ regardless of~$n$, meaning that we can always drive the system so that the robot and the hitch achieves desired acceleration together, thus overcoming the infeasible hitch motion problem.
        \begin{figure}
            \vspace{0.5em}
            \centering
            \subfloat[Second smallest singular values of $\boldsymbol{B_p}$\label{fig:singular-Bp}]{
                \centering
                \includegraphics[width=0.45\linewidth]{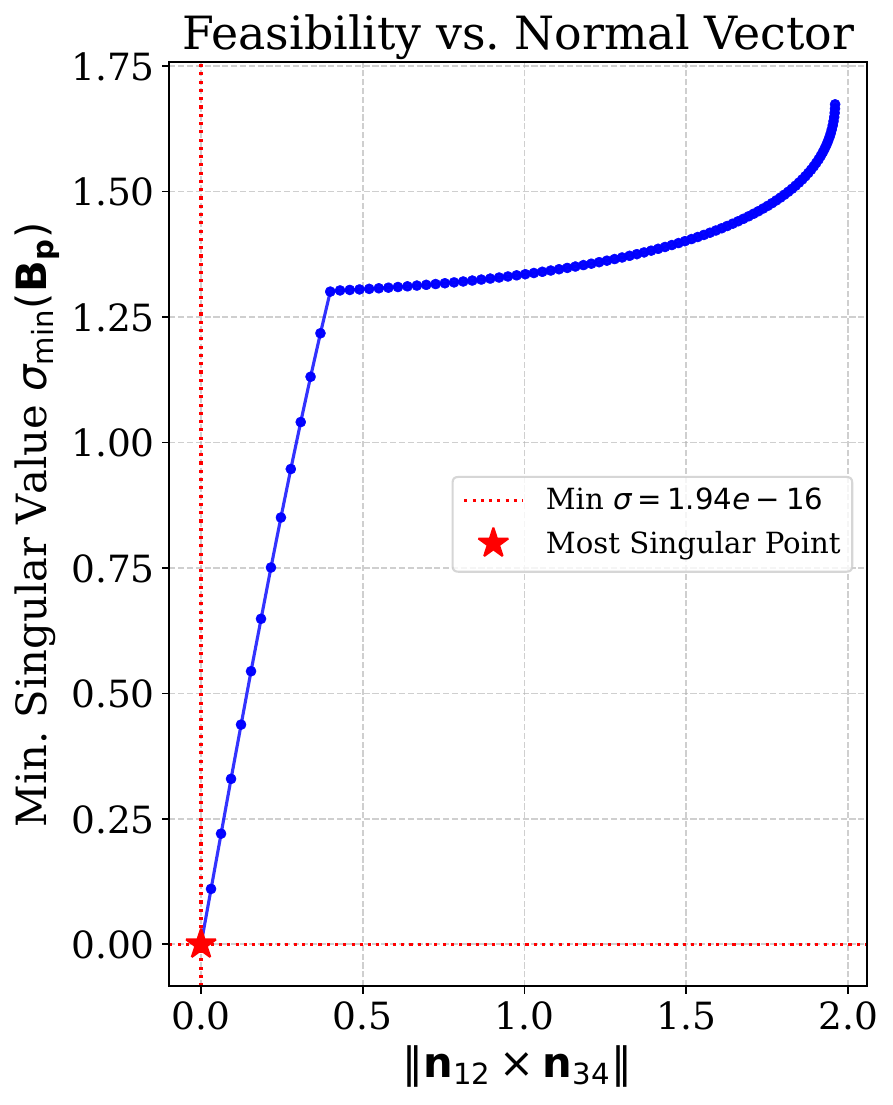}
            }
            \quad
            \subfloat[Smallest singular values of $\boldsymbol{B}_{nonzero}$ \label{fig:singular-Bnonzero}]{
                \centering
                \includegraphics[width=0.45\linewidth]{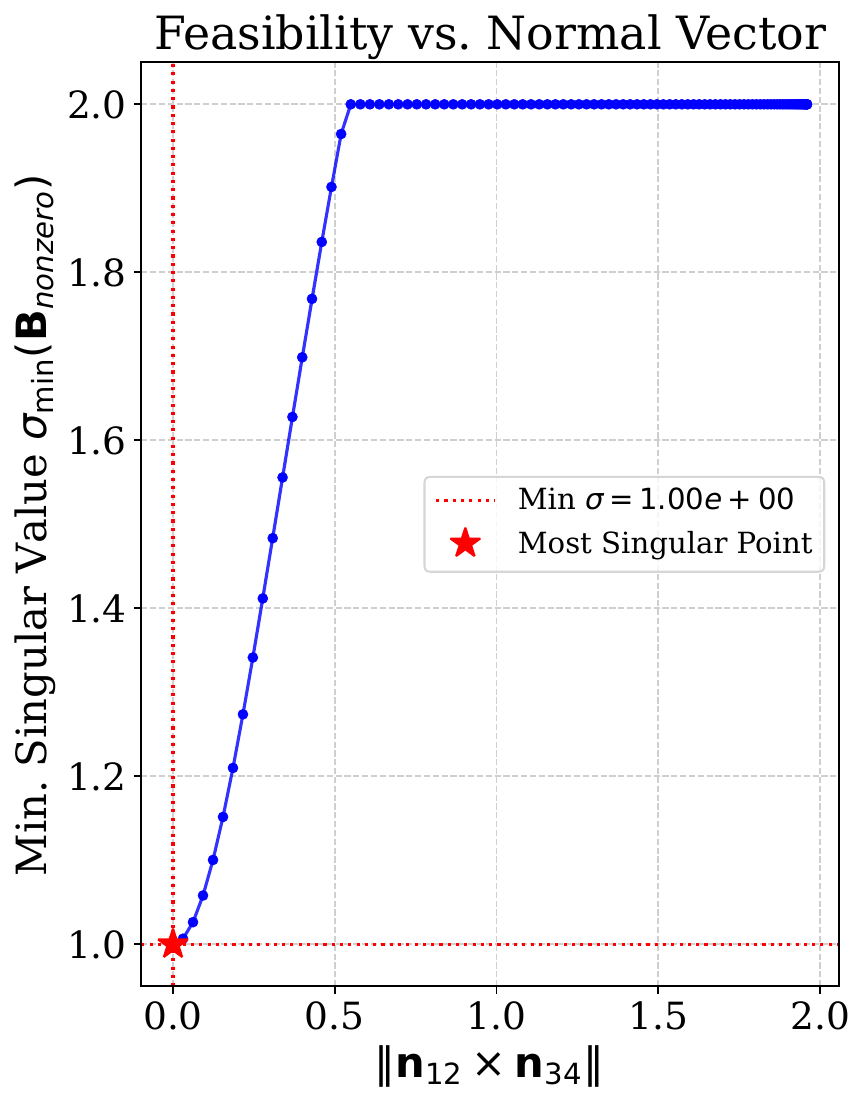}
            }
            \caption{The feasibility test of the nonzero part of the input matrix $\boldsymbol{B}_{nonzero}$ with different configurations of the normal vectors $\boldsymbol{n}_{12}$ and $\boldsymbol{n}_{34}$.}
            \label{fig:svd-B}
        \end{figure}
    }

    \subsection{Configuration control through CLF-HOCBF-QP}{
        Recognizing the physical interpretation of the natural equilibrium of $\boldsymbol{p}$, we devise the controller to drive the system configuration $\left(\boldsymbol{p}, \boldsymbol{n}_{12}, \boldsymbol{n}_{34}, d_{12}, d_{34}\right)$ to stabilize at a reference $\left(\boldsymbol{p}^{r}, \boldsymbol{n}_{12}^{r}, \boldsymbol{n}_{34}^{r}, d_{12}^{r}, d_{34}^{r}\right)$, where the major axis lengths are $d_{12} = \|\boldsymbol{p}_{1} - \boldsymbol{p}_{2}\|$ and $d_{34} = \|\boldsymbol{p}_{3} - \boldsymbol{p}_{4}\|$ of the two ellipsoids, respectively. 
        
        To address the feasibility issue discussed in Sec.~\ref{sec:feasibility}, we find desired robot positions $\boldsymbol{p}_i^{r}$ for $i=1, \dots, 4$ that satisfy ~\eqref{eq:pinandstring} by assuming symmetry, i.e., both triangles formed by a pair of ellipsoid foci and the hitch are isosceles, and $\boldsymbol{p}^{r}, \boldsymbol{p}_i^{r}$ are coplanar for $i=1, \dots, 4$ in $n = 3$,
        \begin{equation}
            \boldsymbol{p}_i^{r} = \boldsymbol{p}^{r} + \frac{d_{i}^{r}}{2}\left(\frac{-\boldsymbol{n}^{r}_{i}}{\sqrt{4-\|\boldsymbol{n}^{r}_{i}\|^2}} + (-1)^i \boldsymbol{\kappa}_{i}\times\boldsymbol{\hat n}^{r}_{i}\right),
            \label{eq:robot-position-ref}
        \end{equation}
        where $\boldsymbol{n}_{1}^{r} = \boldsymbol{n}_{2}^{r} = \boldsymbol{n}^{r}_{12}, \boldsymbol{n}^{r}_{3} = \boldsymbol{n}^{r}_{4} = \boldsymbol{n}^{r}_{34}$, similarly for $\boldsymbol{\hat n}^{r}_i$. $\boldsymbol{\kappa}_{1} = \boldsymbol{\kappa}_{2} = \frac{\boldsymbol{r}_1\times\boldsymbol{r}_2}{\|\boldsymbol{r}_1\times\boldsymbol{r}_2\|}, \boldsymbol{\hat n}^{r}_{12} = \frac{\boldsymbol{n}^{r}_{12}}{\|\boldsymbol{n}^{r}_{12}\|}, \boldsymbol{\kappa}_{3} = \boldsymbol{\kappa}_{4} = \frac{\boldsymbol{r}_3\times\boldsymbol{r}_4}{\|\boldsymbol{r}_3\times\boldsymbol{r}_4\|}, \boldsymbol{\hat n}^{r}_{34} = \frac{\boldsymbol{n}^{r}_{34}}{\|\boldsymbol{n}^{r}_{34}\|}.$

        As a result, the control objective is to find input $\boldsymbol{u}$ that drives the robots to positions $\boldsymbol{p}_i^{r}$ so that the cable-suspended system follows the reference configuration.
        
        Since the system dynamics~\eqref{eq:affine} is control-affine, an efficient and effective choice of control under these constraints is through the Control Lyapunov Function (CLF)-Control Barrier Function (CBF)-Quadratic Programming (QP) framework~\cite{sontag1983lyapunov,7782377}. We first define the Lyapunov function that compiles the objectives depending on the current state $\boldsymbol{x}$, 
        \begin{equation}
            V\left(\boldsymbol{x}\right) = \frac{1}{2}\sum_{i=1}^4 \boldsymbol{e}_{i}^\top\boldsymbol{K_p}\boldsymbol{e}_{i},
            \label{eq:control-lyapunov}
        \end{equation}
        where $\boldsymbol{e}_{\boldsymbol{p}_i}$ is the tracking error for robot $i$, and $\boldsymbol{K_p}$ is a user-defined, diagonal, and positive-definite gain matrix. 
        
        Ideally, the design of the error $\boldsymbol{e}_{i}$ should be such that the convergence of $V(\boldsymbol{x})\rightarrow0$ leads to robots stabilizing at positions $\boldsymbol{p}_i^{r}$, which further induces that the system achieves the reference configuration. Since the achieving desired robot position is a sufficient condition for the system to achieve reference configuration as $\boldsymbol{p}_i^{r}$ is a direct derivation from the reference configuration, we need to ensure that convergence of the Lyapunov function $V(\boldsymbol{x})$ results in the convergence of the robot position.
        Thus, the design of the errors directly impacts the feasibility of maintaining a converging CLF, including ensuring a relative degree of 1 over the input $\boldsymbol{u}$, which we will discuss after the construction of the optimization problem.

        In addition to the converging Lyapunov function, the control input $\boldsymbol{u}$ of the cable-suspended system must be such that satisfies realistic constraints, including positive tension force $\boldsymbol{t}\succ\boldsymbol{0}$, and the cable segment length constraints $q_i(\boldsymbol{x}) > 0$, where $q_i(\boldsymbol{x}) = l_{12} - \|\boldsymbol{r}_i\|$ for $i = 1, 2$ and $q_i(\boldsymbol{x}) = l_{34} - \|\boldsymbol{r}_i\|$ for $i = 3, 4$. However, $q_i(\boldsymbol{x})$ has a relative degree of $2$ to the input vector $\boldsymbol{u}$, i.e., only by taking the second-order derivative of $q_i(\boldsymbol{x})$ can $\boldsymbol{u}$ be revealed with the acceleration of the robots and the hitch. To handle this, we introduce High-Order Control Barrier Functions (HOCBF) by defining a new constraint $\psi_i(\boldsymbol{x}) = \dot q_i(\boldsymbol{x}) + \beta q_i(\boldsymbol{x}) \geq 0$, where $\beta$ is a user-defined parameter that specifies the resistance to the cable segment length from approaching the total cable length. 
        
        Considering these constraints, we obtain the optimal input~$\boldsymbol{u}$ by solving the quadratic programming problem,
        \begin{align}
            & \underset{\boldsymbol{u}, \delta}{\text{min}} & & (\boldsymbol{u-u}_{nominal})^\top(\boldsymbol{u-u}_{nominal}) + \alpha\delta^2,\label{eq:objective}\\
            & \text{subject to} & & \dot V(\boldsymbol{x}) + \gamma V(\boldsymbol{x})\leq\delta\label{eq:clf}\\
            & & & \dot \psi_i(\boldsymbol{x}) + \lambda \psi_i(\boldsymbol{x}) \geq 0, i = 1\dots, 4,\label{eq:cbf}\\
            & & & \boldsymbol{t}\succeq t_{min},\label{eq:linear-input-constraints}\\
            & & & -f_{max}\preceq\boldsymbol{u}\preceq f_{max}.\label{eq:input-constraint}
        \end{align}
        where $\alpha\gg 0$ is a user-defined large positive scalar that penalizing the reference tracking divergence, $\delta\gtrapprox 0$ is a slack variable that measures the reference tracking convergence violation, $\gamma$ is a user-defined parameter to specify the minimum exponential convergence rate of the Lyapunov function, and $\lambda$ is a user-defined parameter to specify the minimum exponential divergence rate of the barrier function. In terms of the aforementioned variables, $\psi_i = -\boldsymbol{\hat r}_i\boldsymbol{\dot r}_i + \beta(l_{i} - \|\boldsymbol{r}_i\|)$, where $l_{i} = l_{12}$ for $i = 1, 2$ and $l_{34}$ for $i = 3, 4.$ $t_{min}$ is a user-defined parameter for the minimum tension along the cable, and the nominal input $\boldsymbol{u}_{nominal} = -t_{min}[\boldsymbol{\hat r}_1^\top, \boldsymbol{\hat r}_2^\top, \boldsymbol{\hat r}_3^\top, \boldsymbol{\hat r}_4^\top]^\top$ following the discussion in Sec.~\ref{sec:nominal}.
        
        The introduction of the reference tracking divergence penalty in the CLF~\eqref{eq:clf} allows the system to treat the Lyapunov function convergence as a ``soft'' constraint than the safety-critical constraints, including the CBF~\eqref{eq:cbf} and the positive tension constraint~\eqref{eq:linear-input-constraints}. We note that although an input-dependent constraint~\eqref{eq:linear-input-constraints} is not common in CBF approaches~\cite{agrawal2021safe}, the fact that our positive tension constraint is linear to the input $\boldsymbol{u}$ makes the constraint compatible with the QP formulation without further manipulation. We expand the input-affine constraints~\eqref{eq:clf}~\eqref{eq:cbf}~\eqref{eq:linear-input-constraints} to obtain the final formulation of the Quadratic Programming Problem,
        \begin{equation}
            \begin{aligned}
                & \underset{\boldsymbol{u}, \delta}{\text{min}} & & (\boldsymbol{u-u}_{nominal})^\top(\boldsymbol{u-u}_{nominal}) + \alpha\delta^2,\\
                & \text{subject to} & & L_{\boldsymbol{h}}V + L_{\boldsymbol{B}}V\boldsymbol{u} + \gamma V\leq\delta,\\
                & & & L_{\boldsymbol{h}}\psi_i + L_{\boldsymbol{B}}\psi_i\boldsymbol{u} + \lambda\psi_i \geq 0, i = 1, \dots, 4,\\
                & & & \boldsymbol{M}^{-1}\boldsymbol{Cu}+\boldsymbol{M}^{-1}\boldsymbol{w}\succ 0,\\
                & & & -f_{max}\preceq\boldsymbol{u}\preceq f_{max},
            \end{aligned}
            \label{eq:clf-hocbf-qp}
        \end{equation}
        where $L_{\text{vec}}(\text{func}) = \frac{\partial}{\partial\boldsymbol{x}}\text{func}(\boldsymbol{x})\cdot\text{vec}(\boldsymbol{x})$ represents the Lie derivative of a function along a vector field, and in the control implementation, the drift~\eqref{eq:drift} field does not consider the unknown damping and external forces $\boldsymbol{f}_d, \boldsymbol{f}_{ext}$.
    }
    \subsection{Error design and convergence}{
        
        A crucial criteria to ensure the feasibility of the QP~\eqref{eq:clf-hocbf-qp} is that both the Lyapunov function $V(\boldsymbol{x})$ and the Barrier function $\psi_i(\boldsymbol{x})$ have a relative degree of 1 with respect to the input $\boldsymbol{u}$, which means both $L_{\boldsymbol{B}}V\neq\boldsymbol{0}$ and $L_{\boldsymbol{B}}\psi_i\neq\boldsymbol{0}$. It is not non-trivial since our system is a second-order system, i.e., $\boldsymbol{u}$ affects the acceleration of the objects. For the cable segment length constraints, we use HOCBF to ensure $L_{\boldsymbol{B}}\psi_i\neq\boldsymbol{0}$. 
        
        If we directly define the desired robot positions, and compute the errors respectively as the difference between the reference and the current value, $V(\boldsymbol{x})$ will have a relative degree of 2 with respect to $\boldsymbol{u}$.
        Although adopting an Higher-Order Control Lyapunov Function (HOCLF) approach could solve the issue of the high relative degree, i.e., replacing~\eqref{eq:clf} with $\ddot V(\boldsymbol{x}) + c_1\dot V(\boldsymbol{x}) + c_0 V(\boldsymbol{x})\leq 0$ and choose the coefficients $c_1, c_0$ such that the characteristic polynomial of the second-order derivative is Hurwitz~\cite{5991573,7524935,chriat2024high}, the second-order derivative can amplify the unmodeled disturbance and noises, including but not limited to $\boldsymbol{f}_{d}$ and $\boldsymbol{f}_{ext}$. Unlike the HOCBF $\psi_i$, of which the Lie-derivatives are sparse as shown in~\eqref{eq:hocbf-comp}, $\ddot V$ will depend on all state variables, which makes its convergence highly sensitive to the noises.  

        Instead, we define the errors of each corresponding quantity as the composite of its reference velocity error and cascaded position error~\cite{khan2020barrier}. Specifically, 
        \begin{equation}
            \boldsymbol{e}_{i} = \boldsymbol{v}^{r}_{i} + \boldsymbol{K}^{cas}_{\boldsymbol{p}}\boldsymbol{e}^{cas}_{\boldsymbol{p}_i} - \boldsymbol{\dot p}_i, i=1, \dots, 4,
        \end{equation}
        where the cascade errors $\boldsymbol{e}^{cas}_{\boldsymbol{p}_i} = \boldsymbol{p}_i^{r} - \boldsymbol{p}_i$ is the position error of robot $i$,
        and $\boldsymbol{v}^{r}_{i}$ is reference velocity of robot $i$. $\boldsymbol{K}^{cas}_{\boldsymbol{p}}$ is the positive-definite diagonal gain matrix.
        In this design, we control the robot positions to follow the references, similar to a velocity control, which converts them into an additional term to the desired velocity reference. Since the Lyapunov function depends on the state variables of the latter $5n$ elements, its partial derivative over the state vector will have nonzero latter half of $5n$ elements, of which the multiplication with the input matrix $\boldsymbol{B}$ is nonzero in $L_{\boldsymbol{B}} V$, ensuring the authority of the input vector $\boldsymbol{u}$ in satisfying the CLF constraint in the QP~\eqref{eq:clf-hocbf-qp}.

        \begin{proposition}
            Convergence of the Lyapunov function $V(\boldsymbol{x})\rightarrow0$ guarantees the convergence of the robot positions $\boldsymbol{p}_i\rightarrow\boldsymbol{p}_i^{r}$ for $i=1,\dots 4$.
            \label{prop:convergence-pos}
        \end{proposition}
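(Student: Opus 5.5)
The argument reduces the statement to a stable linear filter driven by a vanishing input. Since $\boldsymbol{K_p}$ is diagonal and positive definite, $V(\boldsymbol{x}) \geq \frac{1}{2}\lambda_{\min}(\boldsymbol{K_p})\sum_i\|\boldsymbol{e}_i\|^2$, so $V\rightarrow0$ implies $\boldsymbol{e}_i\rightarrow\boldsymbol{0}$ for every $i$. The remaining task is to show that a vanishing composite error $\boldsymbol{e}_i$ forces the cascade error $\boldsymbol{e}^{cas}_{\boldsymbol{p}_i}=\boldsymbol{p}_i^{r}-\boldsymbol{p}_i$ to vanish.

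To do this, assume the reference velocity is consistent with the reference position, i.e. $\boldsymbol{v}^{r}_i=\boldsymbol{\dot p}_i^{r}$. This is the natural reading of ``reference velocity of robot $i$''; for a static reference both sides are zero. Then
\[
\boldsymbol{\dot e}^{cas}_{\boldsymbol{p}_i}=\boldsymbol{\dot p}^{r}_i-\boldsymbol{\dot p}_i=\boldsymbol{e}_i-\boldsymbol{K}^{cas}_{\boldsymbol{p}}\boldsymbol{e}^{cas}_{\boldsymbol{p}_i},
\]
so each cascade error obeys a linear time-invariant system $\boldsymbol{\dot\xi}=-\boldsymbol{K}^{cas}_{\boldsymbol{p}}\boldsymbol{\xi}+\boldsymbol{e}_i(t)$. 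Because $\boldsymbol{K}^{cas}_{\boldsymbol{p}}$ is diagonal and positive definite, this system decouples into scalar stable first-order filters. Writing $k_{\min}$ for the smallest diagonal gain, each component satisfies
\[
\|\boldsymbol{\xi}(t)\|\le e^{-k_{\min}(t-t_0)}\|\boldsymbol{\xi}(t_0)\|+\int_{t_0}^{t}e^{-k_{\min}(t-s)}\|\boldsymbol{e}_i(s)\|\,ds .
\]

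The main step is showing that the convolution term vanishes when its input does; this is the standard input-to-state stability, or converging-input converging-state, argument. Fix $\varepsilon>0$ and pick $T$ with $\|\boldsymbol{e}_i(s)\|<\varepsilon$ for all $s>T$. Split the integral at $T$:
\begin{itemize}
\item The part over $[t_0,T]$ is bounded by $e^{-k_{\min}(t-T)}\sup_{[t_0,T]}\|\boldsymbol{e}_i\|/k_{\min}$, which tends to zero as $t\to\infty$. The supremum is finite because the trajectory is continuous on a compact interval.
\item The part over $[T,t]$ is at most $\varepsilon/k_{\min}$.
\end{itemize}
Hence $\limsup_{t\to\infty}\|\boldsymbol{e}^{cas}_{\boldsymbol{p}_i}(t)\|\le\varepsilon/k_{\min}$ for every $\varepsilon>0$, which gives $\boldsymbol{p}_i\rightarrow\boldsymbol{p}_i^{r}$.

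I expect the main obstacle to be justifying $\boldsymbol{v}^{r}_i=\boldsymbol{\dot p}^{r}_i$. The issue is that $\boldsymbol{p}_i^{r}$ in \eqref{eq:robot-position-ref} depends on the state through $\boldsymbol{\kappa}_i$, so it is not a pure exogenous signal. If the reference velocity is defined as the total time derivative of \eqref{eq:robot-position-ref}, the identity holds and the argument goes through. Otherwise, I would treat the mismatch $\boldsymbol{v}^{r}_i-\boldsymbol{\dot p}^{r}_i$ as an additional input to the same filter. It is zero for a static reference whenever $\boldsymbol{\kappa}_i$ is constant, which holds automatically in the planar case, and in general I would assume it vanishes asymptotically; the same ISS bound then still yields convergence.

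As a further remark, once $\boldsymbol{p}_i\rightarrow\boldsymbol{p}_i^{r}$ holds, convergence to the reference configuration follows, because \eqref{eq:robot-position-ref} was built to satisfy \eqref{eq:pinandstring}.
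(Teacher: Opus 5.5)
Your proof is correct and follows the same route as the paper's. Both arguments reduce the claim to the stable first-order system $\boldsymbol{\dot e}^{cas}_{\boldsymbol{p}_i} + \boldsymbol{K}^{cas}_{\boldsymbol{p}}\boldsymbol{e}^{cas}_{\boldsymbol{p}_i} = \boldsymbol{e}_i \rightarrow \boldsymbol{0}$ and conclude from positive definiteness of $\boldsymbol{K}^{cas}_{\boldsymbol{p}}$, but yours is more careful in two places: you state the assumption $\boldsymbol{v}^{r}_i = \boldsymbol{\dot p}^{r}_i$ that the paper uses silently when it writes $\frac{d}{dt}(\boldsymbol{p}_i^{r}-\boldsymbol{p}_i) = \boldsymbol{v}^{r}_i - \boldsymbol{\dot p}_i$, and you actually prove the converging-input converging-state step that the paper only asserts.
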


        \begin{proof}
            With the slack variable $\delta = 0$, the CLF constraint~\eqref{eq:clf} becomes $\dot V(\boldsymbol{x}) + \gamma V(\boldsymbol{x})\leq0$. As $t\rightarrow\infty$, $V(\boldsymbol{x})\rightarrow 0$ constrained by the CLF~\eqref{eq:clf}.  Since $V(\boldsymbol{x})$ is a sum of squared norms of the composite errors $\boldsymbol{e}_i$, $V(\boldsymbol{x})\rightarrow 0$ implies that for each robot, $\boldsymbol{e}_i\rightarrow0$, which expands to 
            \begin{equation}
                \boldsymbol{v}^{r}_{i} + \boldsymbol{K}^{cas}_{\boldsymbol{p}}\boldsymbol{e}^{cas}_{\boldsymbol{p}_i} - \boldsymbol{\dot p}_i\rightarrow0.
                \label{eq:vel-error-convergence}
            \end{equation}
            Taking derivative of the cascade position errors, we obtain
            \begin{align}
                \boldsymbol{\dot e}^{cas}_{\boldsymbol{p}_i} =& \frac{d}{dt}(\boldsymbol{p}_i^{r} - \boldsymbol{p}_i)\\
                =& \boldsymbol{v}_{i}^{r}-\boldsymbol{\dot p}_i.\label{eq:proof-cascade-error-convergence}
            \end{align}
            Given~\eqref{eq:vel-error-convergence}, we further expand~\eqref{eq:proof-cascade-error-convergence},
            \begin{equation}
                \boldsymbol{\dot e}^{cas}_{\boldsymbol{p}_i} \rightarrow \boldsymbol{v}_{i}^{r}-\boldsymbol{v}^{r}_{i} - \boldsymbol{K}^{cas}_{\boldsymbol{p}}\boldsymbol{e}^{cas}_{\boldsymbol{p}_i},
            \end{equation}
            which shows $\boldsymbol{\dot e}^{cas}_{\boldsymbol{p}_i} + \boldsymbol{K}^{cas}_{\boldsymbol{p}}\boldsymbol{e}^{cas}_{\boldsymbol{p}_i}\rightarrow\boldsymbol{0}$. Since $\boldsymbol{K}^{cas}_{\boldsymbol{p}}$ is a positive-definite diagonal matrix, the robot position error as a cascade term in the composite velocity error, $\boldsymbol{e}^{cas}_{\boldsymbol{p}_i}$ converge to $0$ as the CLF constraint is satisfied, which means the robot achieves desired positions.
        \end{proof}
        Once the robot satisfies the desired position, the hitch position $\boldsymbol{p}$, ellipsoid normal vectors $\boldsymbol{n}_{12}$ and $\boldsymbol{n}_{34}$, and ellipsoid major axis lengths $d_{12}, d_{34}$ will converge at the desired references, as given in~\eqref{eq:robot-position-ref}.
    }

}
\begin{figure*}[t]
    \centering
    \subfloat[{The evolution of the Lyapunov function\label{fig:L-3D-static}}]{
        \centering
        \includegraphics[width=0.45\linewidth]{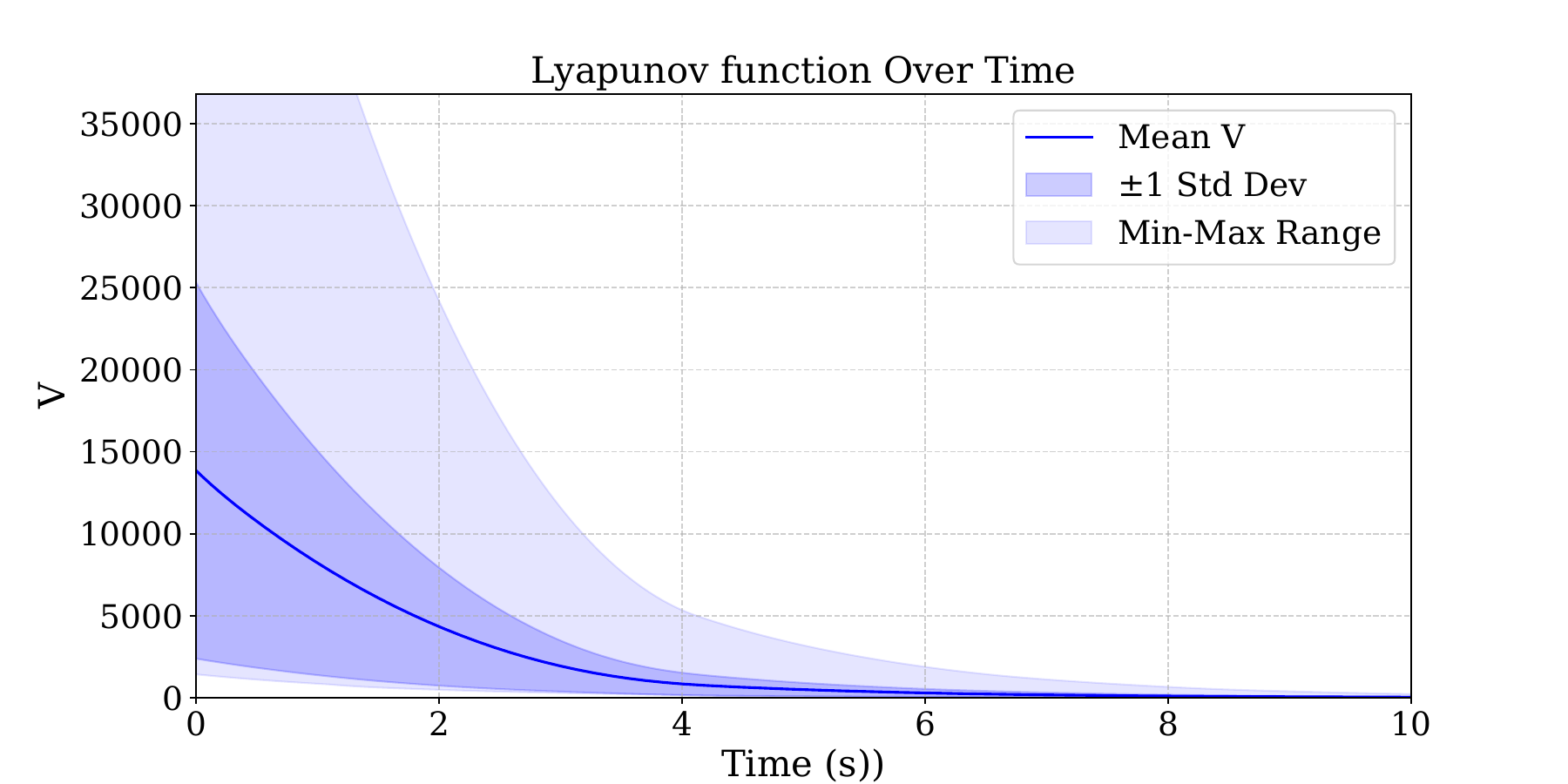}
    }\quad
    \subfloat[{The evolution of the reference error summation\label{fig:E-3D-static}}]{
        \centering
        \includegraphics[width=0.45\linewidth]{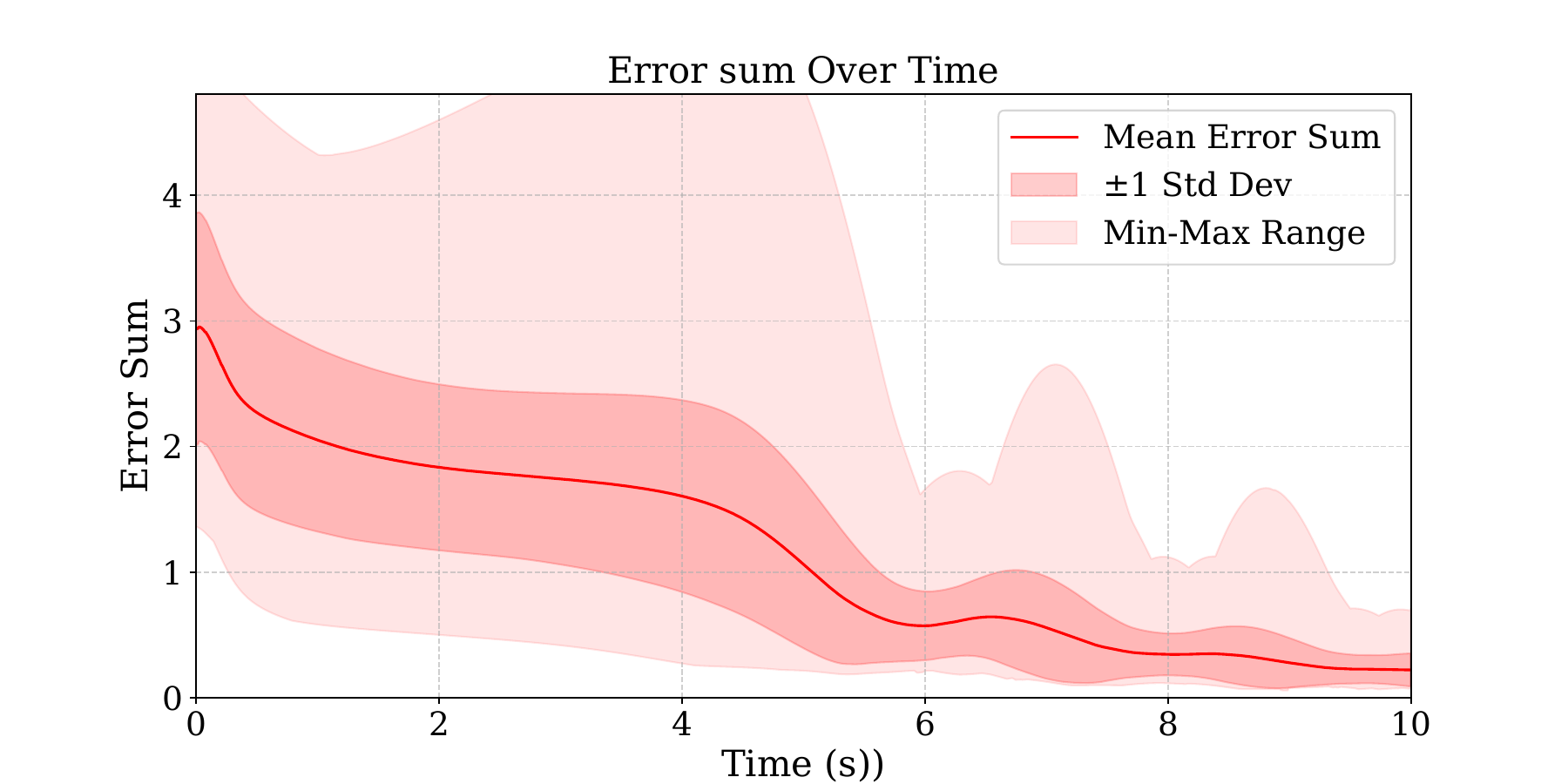}
    }

    \caption{Experiment 1: the cable-suspended aerial system in 3D following a static reference configuration from a random initialization, with first-order damping $c_d = 0.2$ at the hitch unknown to the controller.}
    \label{fig:3d-plots-static}
\end{figure*}

\begin{figure*}[t]
    \centering
    \subfloat[{The evolution of the Lyapunov function\label{fig:L-3D-noisy}}]{
        \centering
        \includegraphics[width=0.45\linewidth, height=0.2\linewidth]{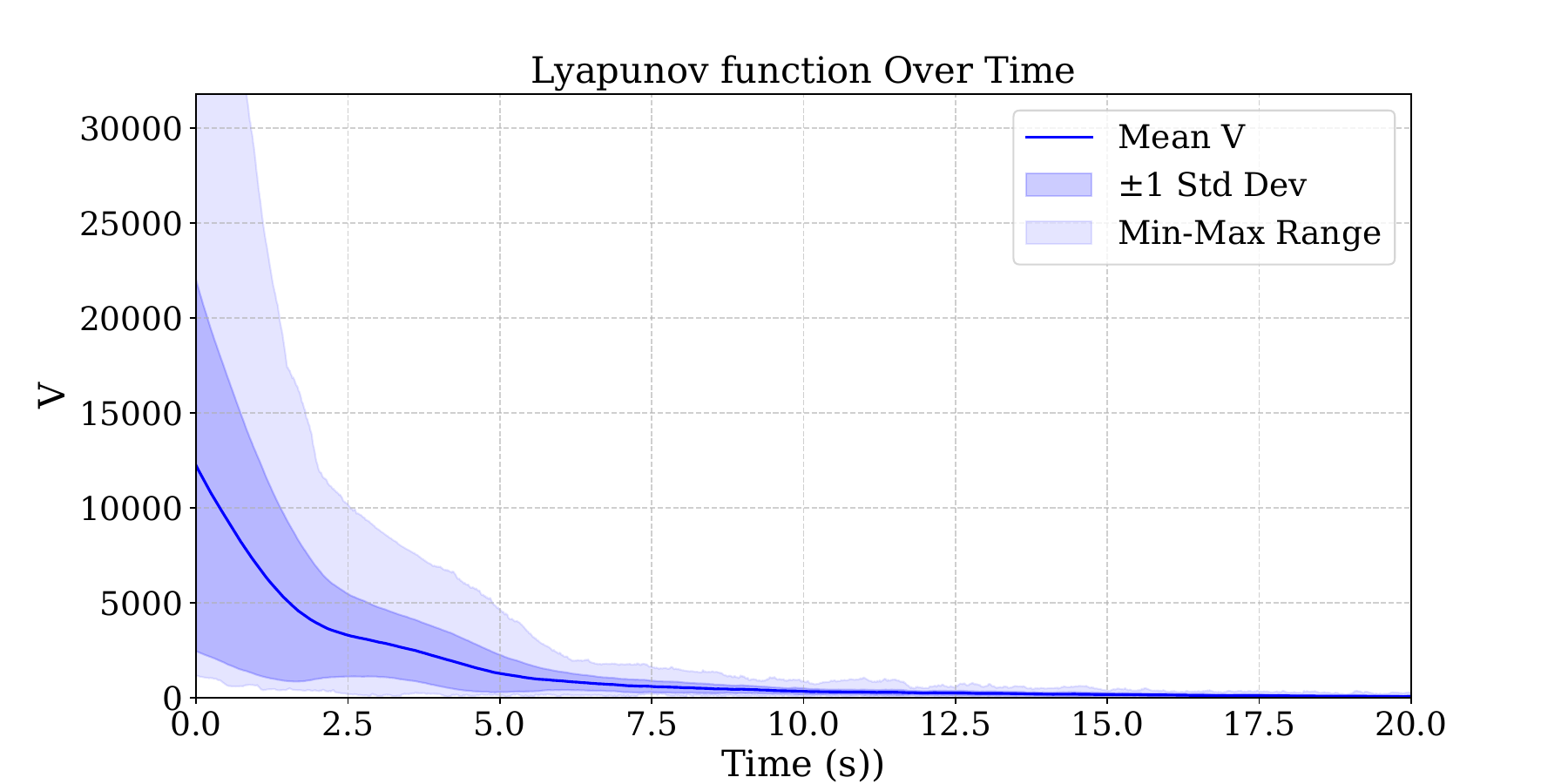}\qquad
    }\quad
    \subfloat[{The evolution of the reference error summation\label{fig:E-3D-noisy}}]{
        \centering
        \includegraphics[width=0.45\linewidth, height=0.2\linewidth]{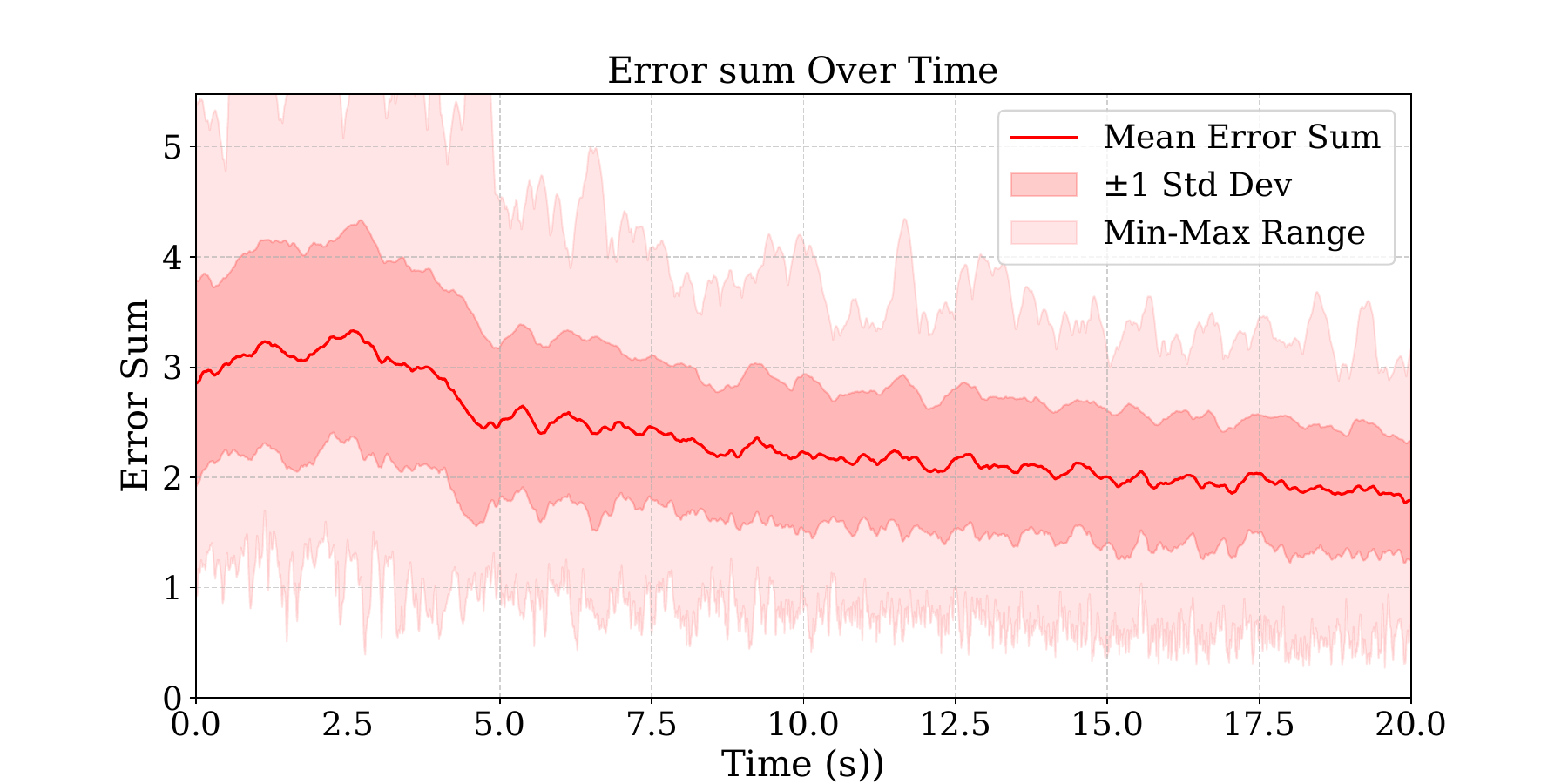}
    }
    \caption{The results of the system in 3D following a dynamic reference configuration of which the hitch moves at $0.1$m/s from a random initial state, with $c_d = 0.2$ and a Gaussian noise with a standard deviation of $5$N at the hitch both unknown to the controller.}
    \label{fig:3d-plots-noisy}
\end{figure*}
\section{Evaluation}{
    We validate the proposed cascade CLF-HOCBF-QP framework in numerical simulations implemented in Python for both 2-dimensional ($n=2$) and 3-dimensional ($n=3$) evaluations, although we present only 3D evaluations in this paper. First, we implement the dynamics of the cable-suspended system described by the control-affine differential equation~\eqref{eq:2-cable-4-robot-system} using Euler integration in discrete time increments $\Delta t$. To minimize the drift caused by the numerical integration, we project the integration onto the cable-induced constraints by solving~\eqref{eq:pinandstring} from the integrated values to ensure the efficacy of the model. At every time step, we construct the quadratic programming problem in~\eqref{eq:clf-hocbf-qp} using cvxpy~\cite{diamond2016cvxpy,agrawal2018rewriting} based on the current states, and solve it to obtain the optimal input $\boldsymbol{u}$ using the Operator Splitting Quadratic Programming solver~\cite{osqp}.~\footnote{The implementation is available at~\url{https://github.com/Jarvis-X/PinandStringEllipse}}.

    We choose the simulation parameters and gain coefficients as follows, $\Delta t = 0.005$s, $\boldsymbol{K_p} = 20\boldsymbol{I}, \boldsymbol{K_p}^{cas} = 10\boldsymbol{I}$, $\gamma=100\Delta t, \alpha=10^6, \beta=10, \lambda=100$, $t_{min} = 0.1$N, $u_{max} = 20$N, $m_i = 0.35$kg, $m=0.005$kg. Under these parameters, solving the QP typically takes $<0.01s$ at each step running on an Intel i7-11800H low-voltage CPU, which highlights the real-time practicality of the controller.

    In the spatial case $n = 3$, we perform three sets of experiments. First, we randomly initialize the state variables that satisfy the kinematic model~\eqref{eq:pinandstring}, and let the controller drive the system to follow a static reference configuration. Second, we randomly initialize the state variables, and let the controller drive the system dynamics to follow a slow reference while the hitch is receiving a large Gaussian noise. Third, we gradually increase the reference speed, and let the system to follow the reference dynamically. We report the statistics in terms of the mean, standard deviation, and min-max values of the Lyapunov function value and the summation of reference errors. Notably, in all three experiments, we introduce a first-order damping at the hitch $\boldsymbol{f}_d = -c_d\boldsymbol{\dot p}$, which is unknown to the controller.
    
    Each set of simulation is repeated $100$ times with different initial configurations, and we record the evolution of the Lyapunov function~\eqref{eq:control-lyapunov} and the summation of error norms over the course of each simulation,
    \begin{equation}
        \begin{aligned}
            e_{sum} =& \|\boldsymbol{p}^{r} - \boldsymbol{p}\| + \|\boldsymbol{n}_{12}^{r} - \boldsymbol{n}_{12}\| \\
            &+ \|\boldsymbol{n}_{34}^{r} - \boldsymbol{n}_{34}\| + |d_{12}^{r} - d_{12}| + |d_{34}^{r} - d_{34}|,
        \end{aligned}
    \end{equation}
    defined in terms of the difference between the current configuration of the cable-suspended system and a reference. 
    
    For experiment 1, we report the statistics over a period of $10$ seconds. For experiment 1, we report the statistics over a period of $20$ seconds. For experiment 3, we run the simulation for $15$s, and report the statistics after $10$ seconds for each reference speed. Across all experiments, despite the excessive numerical values of $V$ due to its weighted composition of position and velocity errors, its evolution over time always diminishes as time progresses.

    \paragraph{Experiment 1 - Static reference} 
    The first experiment validates the fundamental performance of the proposed control framework in a static tracking scenario. As depicted in Fig.~\ref{fig:L-3D-static}, the controller successfully drives the randomly initialized system to a fixed reference configuration. The exponential convergence of the Lyapunov function to zero, shown in Fig.~\ref{fig:3d-plots-static}, directly corresponds to the convergence of the composite velocity errors for each robot. As proven in Proposition~\ref{prop:convergence-pos}, this guarantees that the robot position errors also converge to zero. This is empirically confirmed in Fig.~\ref{fig:E-3D-static}, where the sum of configuration errors, $e_{sum}$, rapidly diminishes. This result confirms the efficacy of both the CLF-HOCBF-QP controller and the  mapping~\eqref{eq:robot-position-ref} used to derive target robot positions from the desired system configuration.
    
    \begin{figure*}[t]
        \centering
        \subfloat[{Terminal Lyapunov function statistics\label{fig:L-3D-stats}}]{
            \centering
            \includegraphics[width=0.45\linewidth]{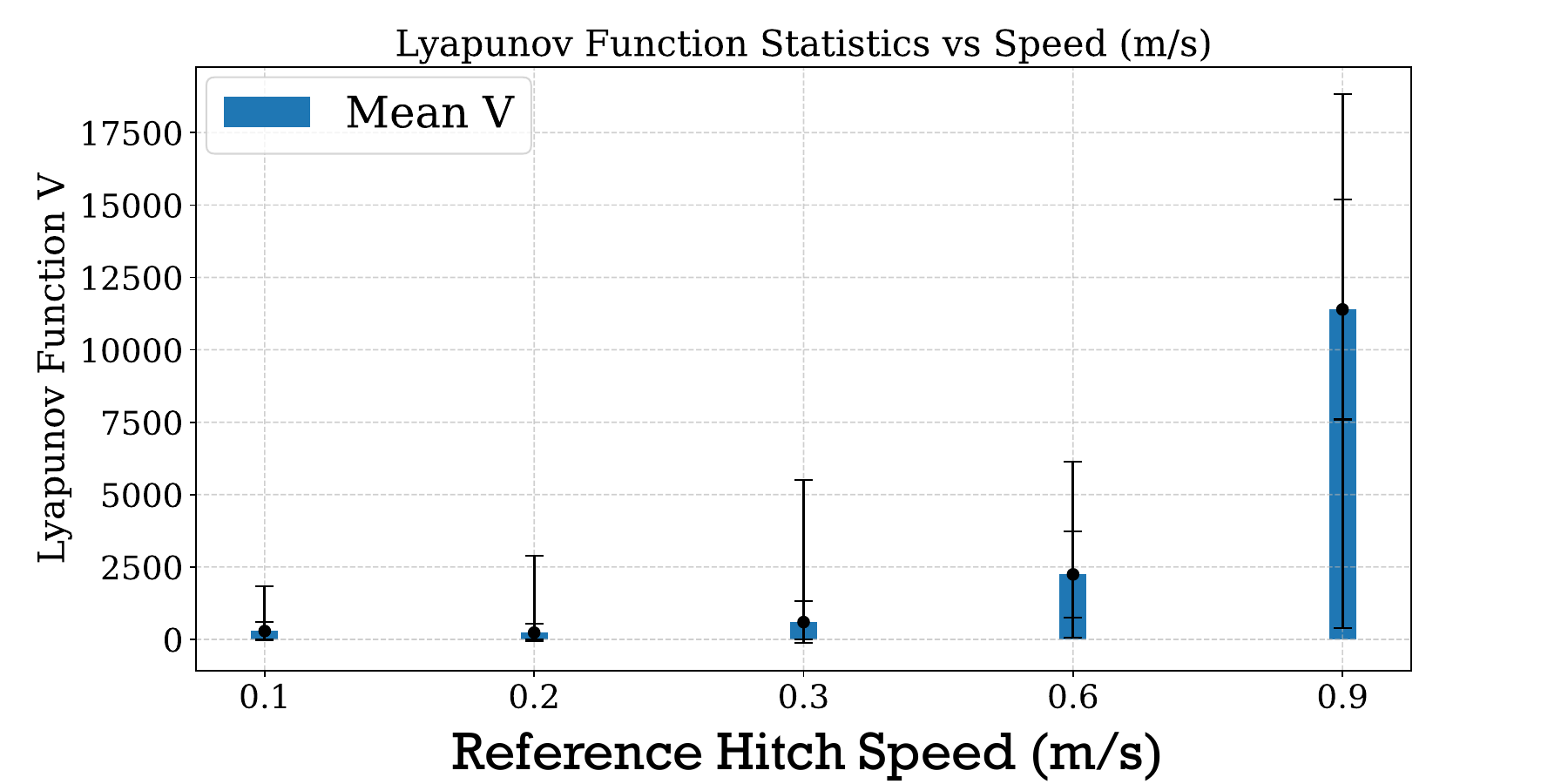}\qquad
        }\quad
        \subfloat[{Terminal reference error summation statistics\label{fig:E-3D-stats}}]{
            \centering
            \includegraphics[width=0.45\linewidth]{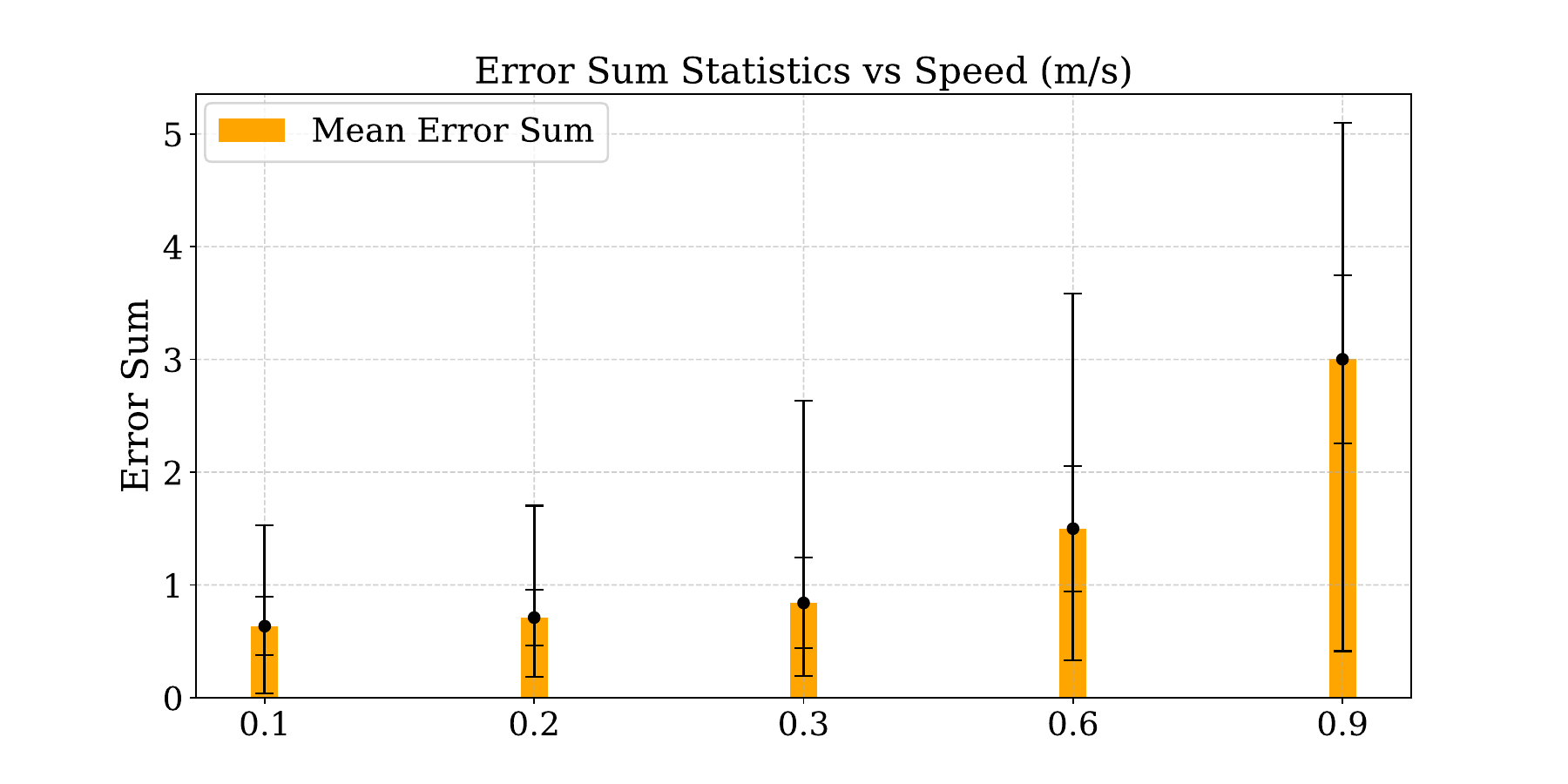}
        }
        \caption{The results of the system following a reference hitch moving at increasing speeds from a random initial state 3D, with $c_d = 0.2$. We report the mean, standard deviation, minimum, and maximum of the Lyapunov function value and the summation of system configuration errors relative to the reference, evaluated at $t>10$s into a $15$s simulation.}
        \label{fig:3d-plots-stats}
    \end{figure*}
    \paragraph{Experiment 2 - Noisy hitch} 
    This experiment evaluates the controller's robustness to significant, unmodeled external disturbances, a critical attribute for real-world viability. The system was tasked with tracking a slow-moving reference (hitch speed of $0.1$m/s) while the hitch was subjected to a zero-mean Gaussian force with a standard deviation of $5$N. Given the small virtual mass of the hitch (m=0.005 kg), this noise corresponds to a disturbance acceleration on the order of 1000 m/s$^2$, which is substantial relative to the system's dynamics and comparable to the control authority (maximum input of 20 N). Despite this large, unmodeled disturbance, the controller attains stability across all 100 trials, and the Lyapunov function consistently converges toward zero (Fig.~\ref{fig:L-3D-noisy}). While the persistent noise results in a non-zero configuration error (Fig.~\ref{fig:E-3D-noisy}), the system remains bounded and tracks the reference effectively. This demonstrates the controller's inherent robustness and its ability to reject high-magnitude external disturbances, a key challenge in practical aerial manipulation. 

    \paragraph{Experiment 3 - Dynamic reference} 
    The third experiment directly addresses the central claim of this paper: that the proposed dynamic control framework enables agile manipulation beyond the limits of quasi-static approaches. The system tracked a reference configuration of which the hitch followed a Lissajous curve at progressively increasing speeds, from $0.1$m/s to $0.9$m/s. Such speeds render quasi-static assumptions, along with associated control approaches to fail. In Fig.~\ref{fig:3d-plots-stats}, the terminal Lyapunov function value and configuration error increase with the reference speed, indicating that perfect tracking becomes more challenging as actuator limits are approached. However, two key observations highlight the success of our method. First, the system successfully maintains stability and tracks the reference across all tested speeds without violating any constraints. Achieving stable formation flight at speeds approaching $1$m/s surpasses the agility of existing methods for inter-cable manipulation. Second, even at the highest speed of $0.9$m/s, where the Lyapunov function stabilizes at a non-zero value, the configuration error remains bounded. This demonstrates the effectiveness of the HOCBF, which prioritizes safety-critical constraints over reference tracking. 
}

\section{Conclusion}{
    In this paper, we presented a novel framework for the dynamic control of a hitch formed by two interlacing cables manipulated by four aerial robots, introducing a control-affine dynamic model derived from ellipsoid geometry and a CLF-HOCBF-QP controller to stabilize the system to a reference configuration. We addressed the high relative degree of the Lyapunov function by integrating a cascade term into the composite error, and proved that the convergence of the Lyapunov function leads to the convergence of the configuration to a reference. Our simulations validated our theoretical results on stable tracking of dynamic trajectories, validating the effectiveness of our approach. As the model and control have been implemented and verified in simulation, we plan to test the system on a physical hardware platform to evaluate its real-world performance. A key challenge to address in future work is the uncertainty of the virtual mass parameter, which emulates the hitch's resistance to motion which requires real-time adaptation to estimate this unknown quantity, enhancing the controller's robustness and applicability.
}

\bibliographystyle{IEEEtran}
\bibliography{ref.bib}
\begin{figure}
    \centering
    \subfloat[{Terminal Lyapunov function statistics\label{fig:L-3D-fast}}]{
        \centering
        \includegraphics[width=\linewidth]{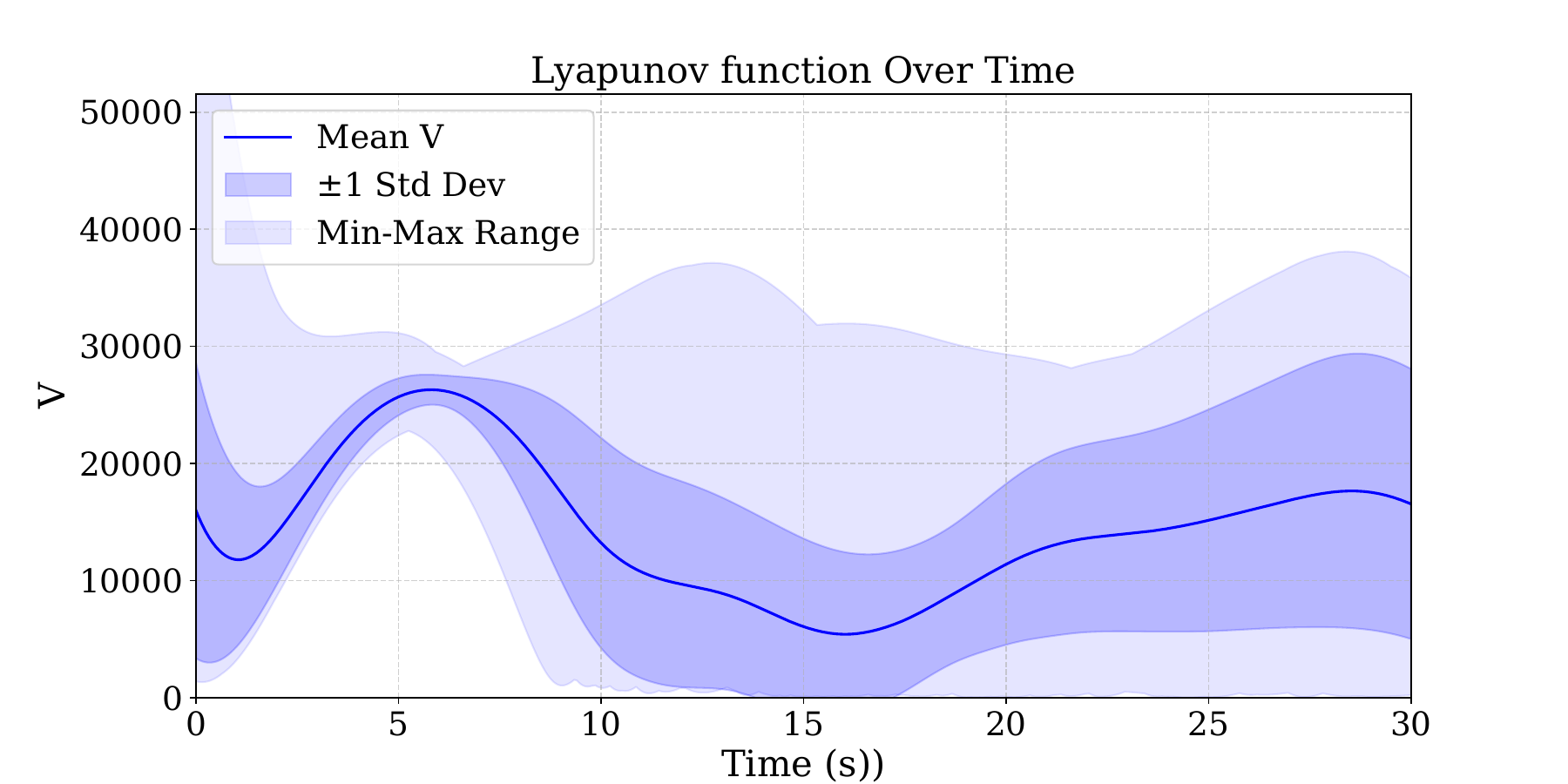}\qquad
    }\quad
    \subfloat[{Terminal reference error summation statistics\label{fig:E-3D-fast}}]{
        \centering
        \includegraphics[width=\linewidth]{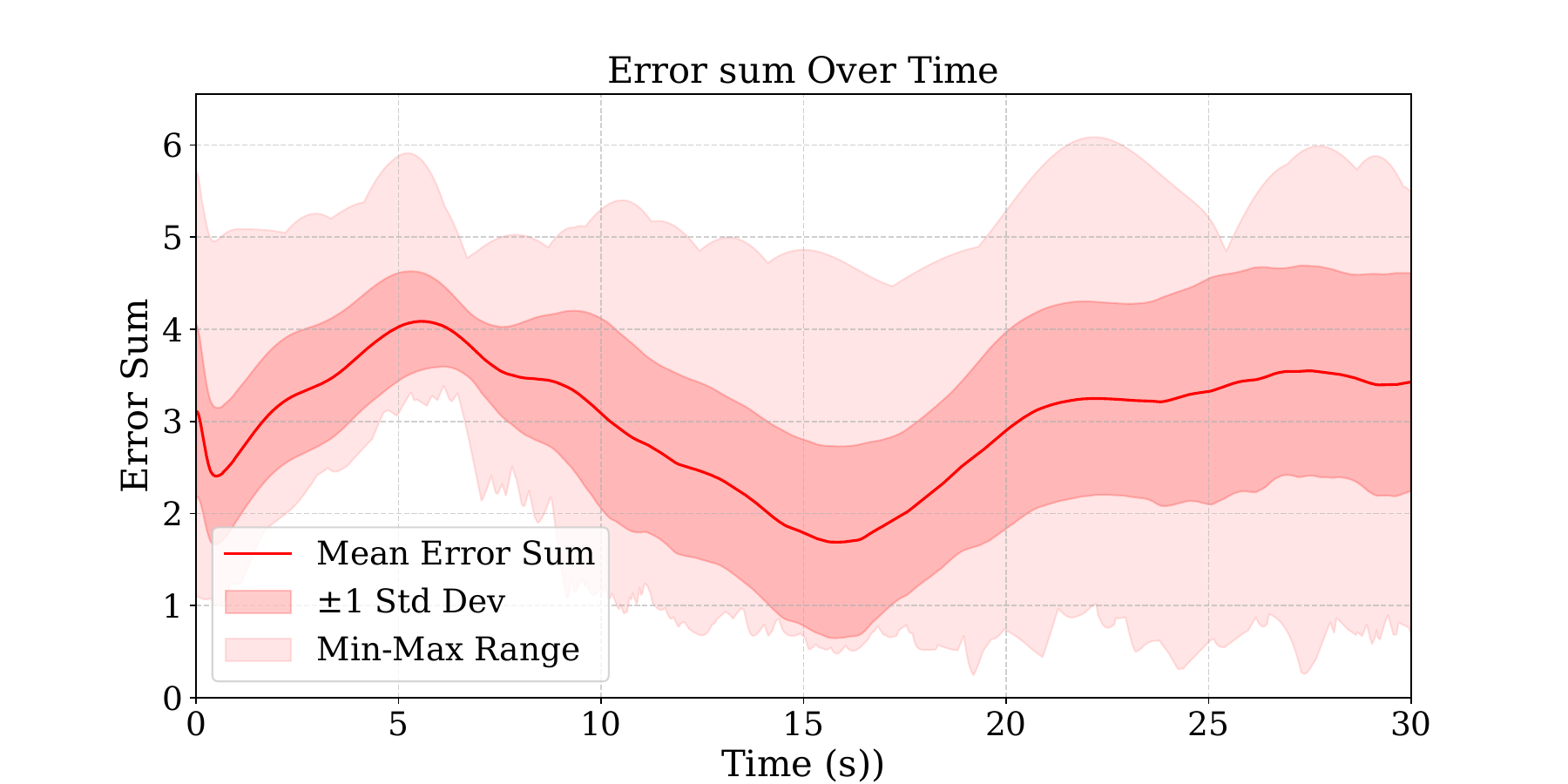}
    }
    \caption{The results of the system in 3D following a reference configuration of which the hitch moving at $0.9$m/s from a random initial state, with $c_d = 0.2$, unknown to the controller.}
    \label{fig:3d-plots-fast}
\end{figure}

\appendix{
    \subsection{Partial Derivatives}{
        The Lie-derivatives of $V(\boldsymbol{x})$~\eqref{eq:control-lyapunov} involves computing the gradient of $V$ over the state vector $\boldsymbol{x}$. Therefore, we show the partial derivative components of the gradient. 
        \begin{equation}
            \begin{aligned}
                \frac{\partial V}{\partial \boldsymbol{p}} &= \boldsymbol{0}, &\frac{\partial V}{\partial \boldsymbol{p}_i} &= -\boldsymbol{K_p}^{cas}\boldsymbol{K_p}\boldsymbol{e}_i\\
                \frac{\partial V}{\partial \boldsymbol{\dot p}} &= \boldsymbol{0}, &\frac{\partial V}{\partial \boldsymbol{\dot p}_i} &= -\boldsymbol{K_p}\boldsymbol{e}_i\\
            \end{aligned}
        \end{equation}

        Similarly for the Barrier function $\psi_i(\boldsymbol{x})$, the partial derivative components are
        \begin{equation}
            \begin{aligned}
                \frac{\partial \psi_i}{\partial \boldsymbol{p}} =& -\beta\boldsymbol{\hat r}_i^\top - \frac{1}{\|\boldsymbol{r}_i\|}(\boldsymbol{\dot p} - \boldsymbol{\dot p}_i)^\top(\boldsymbol{I} - \boldsymbol{\hat r}_i\boldsymbol{\hat r}^\top_i)\\ 
                \frac{\partial \psi_i}{\partial \boldsymbol{p}_i} =& \beta\boldsymbol{\hat r}_i^\top + \frac{1}{\|\boldsymbol{r}_i\|}(\boldsymbol{\dot p} - \boldsymbol{\dot p}_i)^\top(\boldsymbol{I} - \boldsymbol{\hat r}_i\boldsymbol{\hat r}^\top_i)\\ 
                \frac{\partial \psi_i}{\partial \boldsymbol{\dot p}} =& -\boldsymbol{\hat r}_i^\top\\ 
                \frac{\partial \psi_i}{\partial \boldsymbol{\dot p}_i} =& \boldsymbol{\hat r}_i^\top, \text{ and } \frac{\partial \psi_i}{\partial \boldsymbol{p}_j} = \frac{\partial \psi_i}{\partial \boldsymbol{\dot p}_j} = \boldsymbol{0}^\top, i\neq j.
            \end{aligned}
            \label{eq:hocbf-comp}
        \end{equation}
    }
    \begin{figure}
        \centering
        \includegraphics[width=\linewidth, trim={0 1cm 0 1cm},clip]{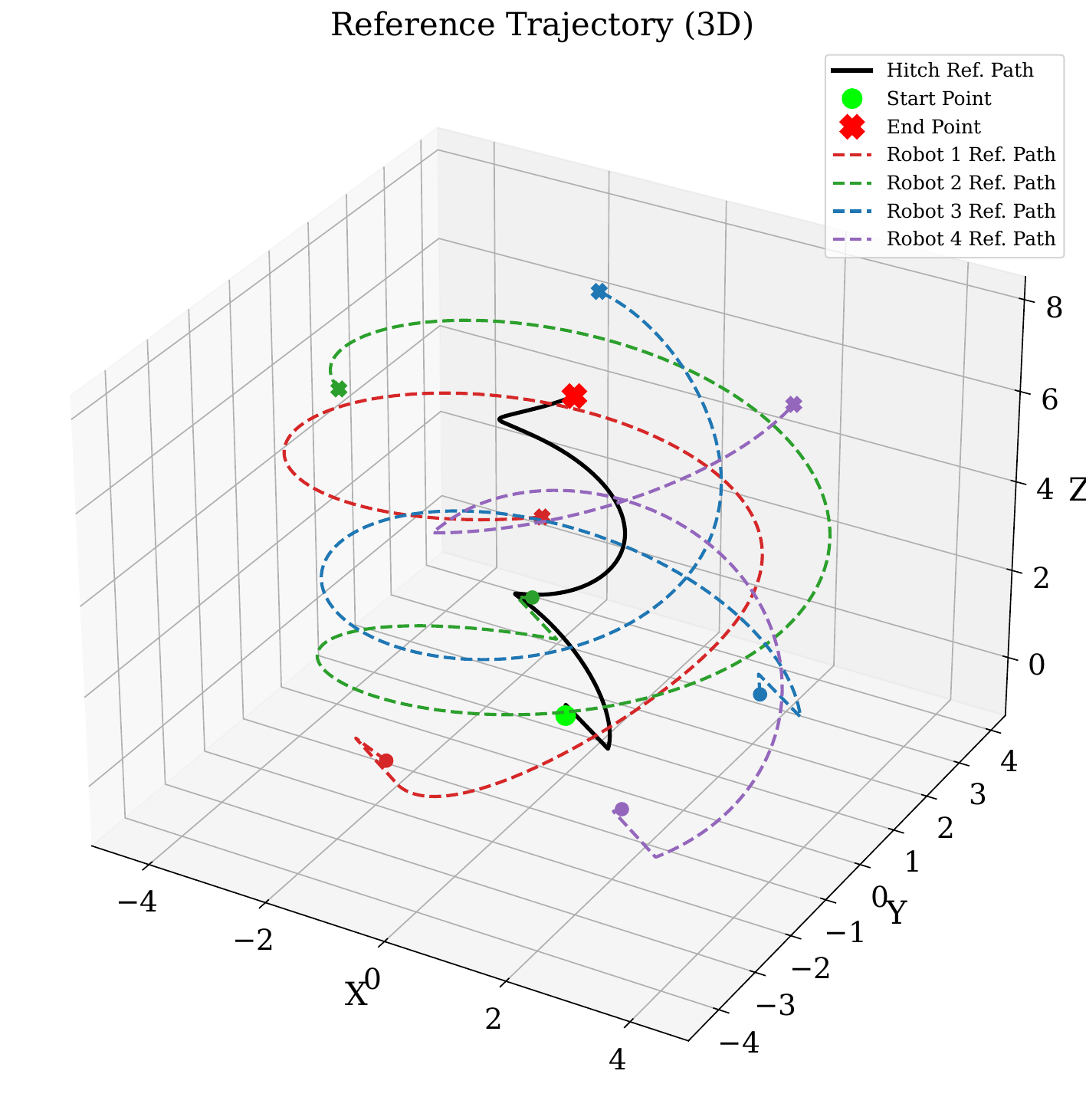}
        \caption{A example state trajectory in 3D in a $10$-s operation.}
        \label{fig:3d-traj}
    \end{figure}
    \subsection{Example Statistics in reference following}{
    Fig.~\ref{fig:3d-plots-fast} shows the evolution of Lyapunov function values and the sum of errors while the system is following a reference of which the hitch moves at $0.9$m/s for $30$s. 
    }

    \subsection{Example state trajectory in reference following}{
    Fig.~\ref{fig:3d-traj} shows the trajectory of the hitch and the robot positions while the system is following a reference of which the hitch moves at $0.9$m/s for $10$s. 
    }
}
\end{document}